\colorlet{shadecolor}{yellow}
\newtheoremstyle{theorem}
  {\topsep}
  {\topsep}
  {}
  {}
  {\itshape}
  {:}
  {.5em}
  {\thmname{#1}\thmnumber{ #2}\thmnote{ (#3)}}
\theoremstyle{theorem}
\newtheoremstyle{proposition}
  {\topsep}
  {\topsep}
  {}
  {}
  {\itshape}
  {:}
  {.5em}
  {\thmname{#1}\thmnumber{ #2}\thmnote{ (#3)}}
\theoremstyle{proposition}
\newenvironment{sequation}{\small\begin{equation}}{\end{equation}}
\newtheorem{theorem}{Theorem}
\newtheorem{proposition}{Proposition}
\newcommand{\ie}{\textit{i}.\textit{e}.}
\newcommand{\stt}{\textit{s}.\textit{t}.}
\newcommand{\re}{\mathrm{Re}}
\newcommand{\im}{\mathrm{Im}}
\newcommand{\notation}{\textit{Notation}.}
\begin{document}
\title{AN-GCN: An Anonymous Graph Convolutional Network Defend Against Edge-Perturbing Attacks}

\author{Ao~Liu,~\IEEEmembership{Student Member,~IEEE},
        Beibei~Li,~\IEEEmembership{Member,~IEEE},
        Tao~Li,
        Pan~Zhou,~\IEEEmembership{Senior Member,~IEEE}
        and~Rui~Wang
\IEEEcompsocitemizethanks{
\IEEEcompsocthanksitem This work was supported in part by the National Key Research and Development Program of China (No. 2020YFB1805400) and in part by the National Natural Science Foundation of China (No. U19A2068, No. 62032002, and No. 62002248). \textit{(Corresponding author: Beibei Li)} \IEEEcompsocthanksitem A. Liu, B. Li, and T. Li are with the School of Cyber Science and Engineering, Sichuan University, Chengdu
610065, China (e-mail: liuao@stu.scu.edu.cn; libeibei@scu.edu.cn; litao@scu.edu.cn).
\IEEEcompsocthanksitem P. Zhou is with the Hubei Engineering Research Center on Big Data Security, School of Cyber Science and Engineering, Huazhong University of Science
and Technology, Wuhan 430074, China (e-mail: panzhou@hust.edu.cn).
\IEEEcompsocthanksitem R. Wang is with the Department of Intelligent Systems, Delft University of Technology, Delft 2628XE, Netherlands (e-mail: R.Wang-8@tudelft.nl).

}

}

\markboth{}%
{Shell \MakeLowercase{\textit{et al.}}: Bare Demo of IEEEtran.cls for Computer Society Journals}

\IEEEtitleabstractindextext{%
\begin{abstract}

Recent studies have revealed the vulnerability of graph convolutional networks (GCNs) to edge-perturbing attacks, such as maliciously inserting or deleting graph edges. However, a theoretical proof of such vulnerability remains a big challenge, and effective defense schemes are still open issues. In this paper, we first generalize the formulation of edge-perturbing attacks and strictly prove the vulnerability of GCNs to such attacks in node classification tasks. Following this, an \textit{anonymous graph convolutional network}, named AN-GCN, is proposed to counter against edge-perturbing attacks. Specifically, we present a node localization theorem to demonstrate how the GCN locates nodes during its training phase. In addition, we design a staggered Gaussian noise based node position generator, and devise a spectral graph convolution based discriminator in detecting the generated node positions. Further, we give the optimization of the above generator and discriminator. AN-GCN can classify nodes without taking their position as input. It is demonstrated that the AN-GCN is secure against edge-perturbing attacks in node classification tasks, as AN-GCN classifies nodes without the edge information and thus makes it impossible for attackers to perturb edges anymore. Extensive evaluations demonstrated the effectiveness of the general edge-perturbing attack model in manipulating the classification results of the target nodes. More importantly, the proposed AN-GCN can achieve 82.7\% in node classification accuracy without the edge-reading permission, which outperforms the state-of-the-art GCN.




\end{abstract}

\begin{IEEEkeywords}
Graph adversarial attack, graph convolutional network, general attack model, anonymous classification.
\end{IEEEkeywords}}

\maketitle

\IEEEdisplaynontitleabstractindextext

\IEEEpeerreviewmaketitle

\section{Introduction}\label{sec:introduction}

\IEEEPARstart{G}{raph} is the core for many high impact applications ranging from the analysis of social networks, over gene interaction networks, to interlinked document collections. In many tasks related to the graph structure, node classification, predicting labels of unknown nodes based on a small number of labeled known nodes, is always a hot issue. However, vulnerabilities of predicting the node category have been gradually explored. Only slight deliberate perturbations of nodes’ features or graph structure can lead to completely wrong predictions, which leads to a critical issue in many application areas, including those where adversarial perturbations can undermine public trust~\cite{95}, interfere with human decision making~\cite{96}, and affect human health and livelihoods~\cite{97}, etc. As a representative example, in the fake news detection on a social network~\cite{53}~\cite{54}, adversaries have a strong incentive to fool the system by the concealed perturbation in order to avoid detection. In this context, a perturbation could mean modification of the graph structure (inserting or deleting edges in the social graph). 

Traversal perturbing feasible edges are widely exploited in constructing edge-perturbing attacks. Trough the message passing~\cite{89}~\cite{90} driven by end-to-end training, perturbtions will pass to the target along the edges. At the same time, the attacker also designs different constraints to make the perturbations satisfying different indicators. While traversal perturbing, the attackers can observe two most important indicators: attack success rate~\cite{80}~\cite{81}, or the concealment of the selected perturbations~\cite{19}~\cite{20}. Due to the discrete characteristic of the graph, the attacker can always find a concealed perturbation method in an traversal way, making classification result of the target node manipulated. In this way, the defender always lags behind the attacker~\cite{106,55}, so we should never assume that the attackers will not be able to destroy them in the future~\cite{70,72,73,74,75,76}.

Existing defense schemes against edge-perturbing attacks fall into the following two categories: 

\begin{itemize}
    \setlength{\itemsep}{0pt}
    \setlength{\parsep}{0pt}
    \setlength{\parskip}{0pt}
    \item \textit{Adversarial training based schemes:} Adversarial training aims to adapt the model to all possible disturbances. The main idea is to alternatively optimize two competing modules during training~\cite{82,83,85,86}, so as to allow the model adapting to edge perturbations~\cite{77,78,79}. 
    \item \textit{Robust aggregation based schemes:} Robust aggregation aims to design a robust aggregation function~\cite{31,94} of GCNs, and further enables the model to identify and filter the potential perturbations~\cite{87,88,89_5}.
\end{itemize}

However, with the underlying assumption that all edges can be perturbable, malicious adversarial graph may introduce a big challenge to the robustness of GCNs, which can compromise the above two types of schemes (in later sections, we show that the state-of-the-art defense scheme~\cite{38} is vulnerable, 72.8\% nodes under protection are misclassified to the target categories). Specifically, these weaknesses include:

\begin{itemize}

\item  Though adversarial training is very effective in defending against adversarial attacks in computer vision and natural language processing, it is not suitable for graphs, because there is no boundary for malicious disturbances on graphs that hold more complex structures than image or text data. Although some studies~\cite{1}~\cite{2} quantified the upper bound of the number of perturbations, it is still non-intuitive. Compared to the computer vision, the perturbations on images cannot be observed by naked eyes~\cite{98}~\cite{99}, which can significantly reduce the feasible perturbation range. This leads to a strong constraint to the adversarial training in computer vision. However, GCNs suffer from the oversmoothing issues~\cite{91,92,93}, which keeps the properties of adjacent nodes to be consistent. In other words, it is difficult to adapt GCNs to all feasible perturbations while not compromising the model’s utility, attributed to the non-intuitive upper bound of perturbations.

\item  \textit{Robust aggregation} quantifies every step of message passing. A robust aggregation function is a necessary condition to ensure the benign message passing. However, high-epochs end-to-end training will bring huge data flow for the model. Since the perturbations are maliciously designed, and the robust aggregation function is built upon the manual rules, well-designed perturbations can damage the preset rules of the aggregation function with the increase of the epoch.
\end{itemize}

In this paper, by deconstructing GCNs, we first demonstrate the vulnerability of the preconditions of GCNs, i.e., the \textit{edge-reading permission} makes it vulnerable, and further withdraw the \textit{edge-reading permission} of the GCN to defend against such edge-perturbing attack. The motivation of our work comes from the inherent observations to the phenomenon that, in practical scenarios, attackers can always rewire the graph topology by utilizing the edge-reading permission to the graph database, thus misleading the node classification results by the GCNs. As an illustrational example, in financial social networks, users and their social connections can be formed as a graph, and the credit of users in the graph can be predicted by GCNs~\cite{100,101}. For ensuring prediction to be accurate, the graph database must open its access permission to the GCN model. This leaves a big chance for attackers to steal and modify the social relationship between users, and further significantly improve the credit of the target user maliciously. For another example, in the anomaly detection for cloud components, relationships between system components are usually organized by subjective metrics, which is also the necessary input for GCNs~\cite{102,103}. Since organized modeling metrics can be exposed~\cite{102,104,105}, attackers are able to maliciously calculate the cloud components' topology themselves, and further modify metrics between components; thus, can successfully hide the anomalous components. 

The aforementioned examples once again reveal the vulnerability of GCNs. In purpose of designing an effective defense scheme against adversarial attacks, in this paper we demonstrate the preconditions that make GCNs vulnerable. Specifically, we unify the existing manual perturbation based edge-perturbing attacks into an automatic general edge-perturbing attack (G-EPA) model. G-EPA, built upon a surrogate GCN, can reduce a \textit{complete graph} to an optimal attack graph by considering both attack concealment and success rate. It may compromise many state-of-the-art defense schemes~\cite{38}. Since G-EPA is a unified representation of edge-perturbing attacks, we demonstrate the graph vulnerability through the mathematical formula, i.e., GCNs are vulnerable if them directly take edges as inputs.

Following this, we propose an anonymous GCN (AN-GCN) to withdraw the \textit{edge-reading permission} of the model. This eliminate the opportunity for attackers to contact edges information, which is effective for defending against edge-perturbing attacks in realistic scenarios. Corresponding to the above illustrational examples, in financial social networks, AN-GCN can anonymously handle relationships between all users. Thus, from the perspective of attackers, the relationship between users is invisible, so as to eliminate the possibility of edge-perturbing attack. In the anomaly detection for cloud components, even if the attacker calculates the metrics between components according to the \textcolor{red}{exposed method}, the anonymity of AN-GCN itself will \textit{refuse} to accept any edges. This \textcolor{red}{invalidates} all potential malicious perturbations. Specifically, after identifying the causes of GCNs vulnerability, firstly, we state a \textit{node localization theorem} to make the node position computable. For the proof, by regarding the feature change of each node in the training phase as the independent signal vibration, we map the signals of all nodes to the Fourier domain. Thus unify the nodes' signals to a same coordinate system. Finally, we build the \textit{node signal model} to figure out how GCNs deconstructs the initial input to keep the node fixed in its position during training, thus proving the stated node localization theorem.

According to the given node localization theorem, we directly generate the node position by a generator (a fully connected network) to replace the corresponding part in GCN. To ensure generation effective, we improve the pioneering spectral GCN~\cite{22} to a discriminator, which tries to detect nodes with the generated position. Because the discriminator is a self-contained GCN, it can classify the node with generative position accurately once the above two-player game is well played.

In our proposed AN-GCN, the positions of nodes are generated from the noise randomly, while ensuring the high-accuracy classification, that is, nodes are classified while keeping \textit{anonymous} to their positions. Since the edge decides the node position, the anonymity of the position eliminates the necessity of edge-reading for AN-GCN, thus eliminating the possibility of modifying edges, so as to solve the vulnerability of GCNs. Our main contributions include:
\begin{itemize}

    \item We propose an AN-GCN without the necessity of edge-reading while maintaining the utility of classification. This ensures that neither roles (including attackers) can read the edge information, so as to essentially defend against edge-perturbing attack.

    \item We unify the existing edge-perturbing attack methods as a general attack model, and further demonstrate the preconditions for the vulnerability of GCNs, i.e., the edge-reading permission to GCNs makes it vulnerable.
    
    \item We state a node localization theorem to make node position during training computable, which provides a theoretical basis for the proposed AN-GCN.

\end{itemize}

The rest of the paper is organized as follows. In Sec.~\ref{secvul}, we propose G-EPA and demonstrate the vulnerability of GCNs. In Sec.~\ref{secisgcn}, we first state and prove the location theorem, it makes node position computable. We then describe the structure of the proposed AN-GCN, including the generator, discriminator, and training methods. In Sec.~\ref{seceva}, we have comprehensively evaluated the theoretical results and models proposed in this paper.

\section{Related Work}

\subsection{Graph Attacks Against GCNs} In 2018, Dai \textit{et al.}~\cite{1} and Zügner \textit{et al.}~\cite{2} first proposed adversarial attacks on graph structures, after which a large number of graph attack methods were proposed. Specific to the task of node prediction, Chang \textit{et al.}~\cite{3} attacked various kinds of graph embedding models with black-box driven method, Aleksandar \textit{et al.}~\cite{4} provided the first adversarial vulnerability analysis on the widely used family of methods based on random walks, and derived efficient adversarial perturbations that poison the network structure. Wang \textit{et al.}~\cite{5} proposed a threat model to characterize the attack surface of a collective classification method, targeting on adversarial collective classification. Basically, all attack types are based on the perturbed graph structure targeted by this article. Ma \textit{et al.}~\cite{19} studied the black-box attacks on graph neural networks (GNNs) under a novel and realistic constraint: attackers have access to only a subset of nodes in the network. Xi \textit{et al.}~\cite{71} proposed the first backdoor attack on GNNs, which significantly increases the mis-classification rate of common GNNs on real-world data.

\subsection{Defenses for GCNs} Tang \textit{et al.}~\cite{6} investigated a novel problem of improving the robustness of GNNs against poisoning attacks by exploring a clean graph and created supervised knowledge to train the ability to detect adversarial edges so that the robustness of GNNs is elevated. Jin \textit{et al.}~\cite{7} used the new operator in replacement of the classical Laplacian to construct an architecture with improved spectral robustness, expressivity, and interpretability. Zügner \textit{et al.}~\cite{8} proposed the first method for certifiable (non-)robustness of graph convolutional networks with respect to perturbations of the node attributes. Zhu \textit{et al.}~\cite{38} proposed RGCN to automatically absorb the effects of adversarial changes in the variances of the Gaussian distributions. Some defense methods use the generation to enhance the robustness of the model. Deng \textit{et al.}~\cite{9} presented batch virtual adversarial training (BVAT), a novel regularization method for GCNs. By feeding the model with perturbing embeddings, the robustness of the model is enhanced. But this method trains a full-stack robust model for the encoder and decoder at the same time without discussing the essence of the vulnerability of GCN. Wang \textit{et al.}~\cite{10} investigated latent vulnerabilities in every layer of GNNs and proposed corresponding strategies including dual-stage aggregation and bottleneck perceptron. To cope with the scarcity of training data, they proposed an adversarial contrastive learning method to train GNN in a conditional GAN manner by leveraging high-level graph representation. But from a certain point of view, they still used the method based on node perturbation for adversarial training. This method is essentially a kind of 
``perturbation" learning and uses adversarial training to adapt the model to various custom perturbations. Feng~\cite{58} proposed the Graph Adversarial Training (GAD), which takes the impact from connected examples into account when learning to construct and resist perturbations. They also introduces adversarial attack on the standard classification to the graph. 


\section{The vulnerability of GCNs}\label{secvul}

In this section, we first give a general attack model by re-examining the detailed phase of graph convolution, and then prove that the vulnerability of the graph is caused by the edge-reading permission. Specifically, in Sec.~\ref{sec3_1}, we give the mathematical expression formula of the encoding and decoding in GCN. In Sec.~\ref{sec3_2}, we formulate the general attack model as G-EPA and demonstrate the vulnerability of GCNs, so as to give the motivation of our proposed defense scheme. In Sec.~\ref{sec3_3}, we give a case study to show how to attack a specific model in details.

\subsection{Graph Convolution Encoding, Decoding and Training}\label{sec3_1}

Let $\mathcal{G}=\left(f,\mathcal{E} \right)$ be a graph, where $f=\left( f ( 1 ), \cdots ,f ( N )  \right)^{\top}$ is a set of features of $N$ nodes, while $f(i)$ is the feature vector of node $i$, and $\mathcal{E}$ is the set of edges, which also decide the positions of all nodes. The adjacent matrix $A\in\mathbb{R}^{N \times N}$ and degree matrix $D\in\mathbb{R}^{N \times N}$ can be calculated according to $\mathcal{E}$. The laplacian matrix $\Delta$ of $\mathcal{G}$ can be obtained according to $D$ and $A$. The one-hot encoding of node categories on $\mathcal{G}$ is $\mathcal{Y}$. Since $A$ is calculated by $\mathcal{E}$,  the graph structure (simultaneously, node positions) can be represented by $A$. The phase of the training model on labeled graph is as follows:

\textit{Encoding. }Encode is a node feature aggregation~\cite{34} process according to $\mathcal{E}$. The graph convolution model maps $f$ into the embedding space $f^e=\left(  f^e ( 1 ), \cdots, f^e ( N ) \right)^{\top}$ through trainable parameter $\theta^{E}$, while $f^e(i)$ denotes the embedding feature vector of node $i$. Let $\mathrm{ENC}(\cdot)$ denotes the encoder, the encoding process is denoted as:
\begin{sequation}
\label{eq1}
f ^ { e } = \mathrm{ENC}\left(f;A, \theta^{E}  \right),
\end{sequation}
Encode can represent the discrete graph structure into a continuous embedded space~\cite{25}, which can learn to represent graph nodes, edges or subgraphs in low-dimensional vectors~\cite{30}. Many studies~\cite{26,27,28,29} visualized the encoded graph to evaluate the effectiveness of model.

\textit{Decoding. }Then model decodes $f^e$ to the one-hot label space $\mathcal{L} \sim \mathbb{R}^{\iota \times 1}$ through the decoder $\mathrm{DEC}$ (In general, it is GCN layers) with the trainable parameter $\theta^{D}$, $\iota$ is the number of categories. The output of the decoder provides gradient descent direction for GCN model training (usually, the distinction between encoder and decoder is flexible, for example, in multi-layer GCN, the first few layers can be regarded as encoders, and the remaining layers can be regarded as decoders). Almost all high performance Graph Neural Networks models (e.g., GraphSAGE~\cite{31}, GAT~\cite{32}, spectral-GCN~\cite{14}, etc.) set the encoder and decoder to the same-design layer, that is, both encoding and decoding involve the aggregation of graph structures $\mathcal{E}$, so the decoder parameters still contain $\mathcal{E}$. The decoding process is denoted as:

\begin{sequation}
\label{eqaag}
\mathcal{Y} = \mathrm{DEC} \left(f^e;A, \theta^{E}  \right),
\end{sequation}
The decoder calculates output according to $A$ and $f^e$.

\textit{Training. }According to the labeled nodes, the parameters of encoder and decoder are trained by end-to-end training, thus realizing the accurate node prediction task. In the training phase, the regularization $\mathrm{Reg}\left(\cdot\right)$ ($l1-norm$~\cite{24}, et al.) will prevent the model from over fitting. Then the training phase can be expressed as:
\begin{small}
\begin{multline}
\label{eq3}
\mathop{\arg\min} \limits_{\theta^{GCN}}  \sum_{}^{} \Big\{ \mathrm{Loss}\Big[\mathcal{Y},\\
\mathrm{DEC}(\mathrm{ENC}(f;A, \theta^{E});A, \theta^{D} )\Big]+\mathrm{Reg}(\theta^{GCN}) \Big\},
\end{multline}
\end{small}
where $\mathrm{Loss}$ is the loss function and $\theta^{GCN}$ is the total parameter set, i.e., $\theta^{E}$ and $\theta^{D}$. Equation~\ref{eq3} is the unified form of current graph convolution training methods, and it is computationally universal (Graph neural networks with sufficient parameters are always computationally universal~\cite{21}). The encoding and decoding process $\mathrm{DEC}(\mathrm{ENC}(\cdot))$ is denoted as $\mathbf{GCN}(\cdot)$.

\subsection{General Edge-Perturbing Attacks} \label{sec3_2}
\begin{figure*}
\centering
\includegraphics[width=0.95\textwidth]{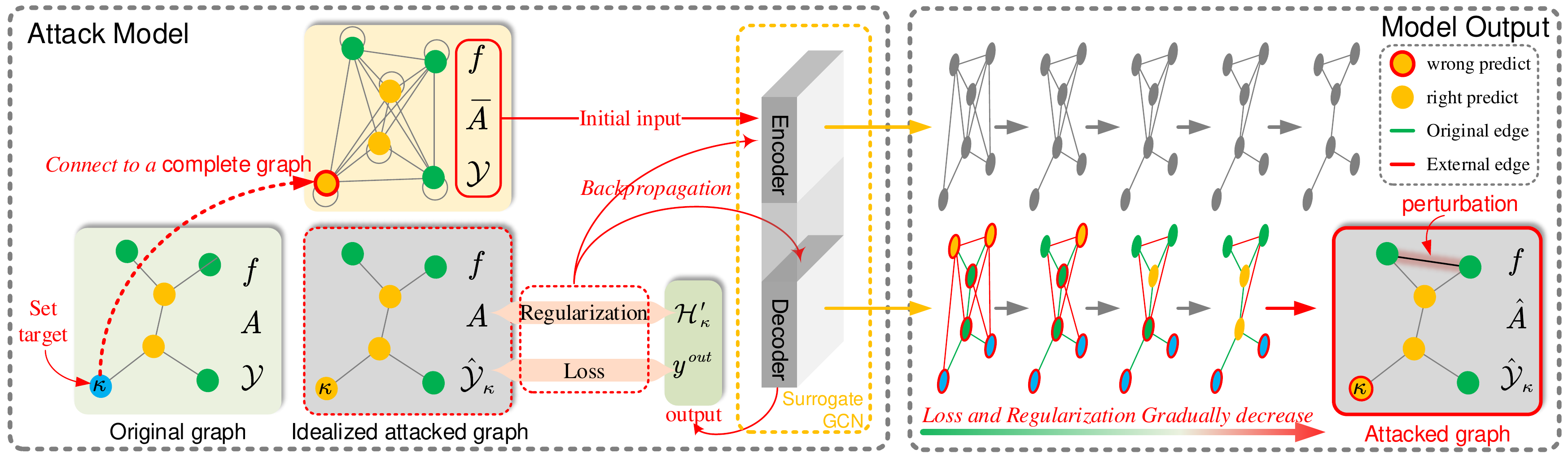}
\caption{The schematic illustration of G-EPA. G-EPA reduces a \textit{complete graph} to a concealed and effective attacked graph. G-EPA iterates while training until the best attack scheme is found.
} \label{figattack}
\end{figure*}

Manual constraints are widely employed in constructing exist edge-perturbing attack. In this section, we propose G-EPA which integrates automatic constraints, thus to realize the concealed attack by automatically traverses all disturbances through end-to-end training. Because the current edge-perturbing attack is also targeted on obtaining the best perturbation scheme, G-EPA can unify the existing edge-perturbing attacks, thus be a general attack model.

We first formulate the aim of the attack in Sec.~\ref{secaim}, then give the specific attack method in Sec.~\ref{secAfind}.

\subsubsection{The aim of the attack}\label{secaim}

Given a target graph $\mathcal{G}_{\mathcal{T}}$ and the target GCN which well-trained on $\mathcal{G}_{\mathcal{T}}$, we denote the parameter of it as the $\theta^{G}_{\mathcal{T}}$. The graph attack perturbs the graph connection, and makes classification of the target node changed by traversing the perturbations on $\mathcal{G}_{\mathcal{T}}$, to further obtain $\mathcal{G}_{victim}=(f,\mathcal{E}_{victim})$. Therfore, the attack is to find a modified graph adjacency matrix $\hat{A}$ corresponding to $\mathcal{E}_{victim}$, let
\begin{sequation}
\label{eqaaa}
\hat{\mathcal{Y}}_{\kappa} = \mathop{\mathbf{GCN}}\left(f; \hat{A}, \theta^{G}_{\mathcal{T}}\right),
\end{sequation}
where $\kappa$ denotes the node of modified category, and $\hat{\mathcal{Y}}_{\kappa}$ denotes the distribution of the modified node category set in $\mathcal{L}$. That is, in $\hat{\mathcal{Y}}_{\kappa}$, the category of $\kappa$ is manipulted as the target category, meanwhile making other nodes keep original categories (whether single-target and multi-target). From the perspective of model optimization optimization~\cite{35}, it could be either evasion attack~\cite{19,20} or poisoning attack~\cite{16,17,18}. Once $\hat{A}$ is found, $\mathcal{E}_{victim}$ can be calculated by $\hat{A}$, to realize the effective attack.

The aforementioned edge-perturbing attack process can be formulated as an $\hat{A}$-finding game: 
\begin{sequation}
\label{eq10}
\hat{A} = \mathop{\mathbf{ATTACK}}\left(\theta^{E}_{\mathcal{T}}, \theta^{D}_{\mathcal{T}}, \mathcal{E}\right),
\end{sequation}
where $\theta^E_{\mathcal{T}}$ and $\theta^D_{\mathcal{T}}$ are the parameters of the encoder and decoder of the target GCN, respectively. In other words, the attacker perturbs $\theta^{E}_{\mathcal{T}}$, $\theta^{D}_{\mathcal{T}}$ and $\mathcal{E}$, thus finds $\hat{A}$ to realize the attack. 

\subsubsection{$\hat{A}$ obtaining method}\label{secAfind}

In this section, we will give the method of obtaining $\hat{A}$.

Equation~\eqref{eqaaa} can be expressed as:
\begin{sequation}
\label{eqaac}
    \hat{\mathcal{Y}}_{\kappa} = \mathop{\mathrm{DEC}} \left[\mathop{\mathrm{ENC}}(f;\hat{A}, \theta^{E}_{\mathcal{T}});\hat{A}, \theta^{D}_{\mathcal{T}}\right],
\end{sequation}
once the attacker successfully obtains $\hat{A}$ in Eq.~\eqref{eqaac}, the attack will be completed, furthermore, it takes advantage of the vulnerability of GCN. 

Next, we will give the method to make the model automatically find the best perturbation scheme, i,e., $\hat{A}$.

Before giving the attack steps, we first give the preparatory work: 

\begin{itemize}

\item Connecting target node $\kappa$ to a \textit{complete graph} $\bar{\mathcal{G}}$, where each node has closed loops, the diagonal matrix and adjacency matrix of $\bar{\mathcal{G}}$ are $\bar{D}$ and $\bar{A}$, respectively. The node feature sets of $\mathcal{G}$ and $\bar{\mathcal{G}}$ are the same, and the node category set of $\bar{\mathcal{G}}$ is the modified set, i.e., $\hat{\mathcal{Y}}_{\kappa}$.
\end{itemize}

In order to find $\hat{A}$ in Eq. \eqref{eqaac}, the attack can be constructed as the following steps:

\begin{itemize}

\item Introduce a \textit{surrogate} GCN which has the same layer structure and parameters with the target GCN. In the \textit{surrogate} GCN, $\theta^{E}_{\mathcal{T}}$ and $\theta^{D}_{\mathcal{T}}$ are frozen.

\item Set a trainable parameter matrix $\mathcal{H}$ in \textit{surrogate} GCN.

\item Take $\bar{\mathcal{G}}$ as the input of \textit{surrogate} GCN.

\item Forward propagation. In training epoches, $\hat{A}$ is dynamically changed according to $\mathcal{H}$. Specifically, let $\mathcal{H}+\mathcal{H}^{\top}$ act on $\bar{A}$ by Hadamard Product. Other calculation methods remain unchanged with target GCN.

\item Back propagation. Calculate the loss of the output and $\hat{\mathcal{Y}}_{\kappa}$, and update $\mathcal{H}$ according to the loss (reminding that $\theta^{E}_{\mathcal{T}}$ and $\theta^{D}_{\mathcal{T}}$ is frozen).
\end{itemize}
The schematic illustration of the attack is shown as Fig.~\ref{figattack}.

In the attack process, Hadamard product makes trainable matrix modify $\bar{A}$ element by element, and let $\mathcal{H}$ plus $\mathcal{H}^{\top}$ ensure the symmetry of the modified matrix. Let $\mathcal{H}^{\prime}=\mathcal{H}+\mathcal{H}^{\top}$, the adjacency matrix after attack 
\begin{sequation}\label{eqA}
    \hat{A} = \mathcal{H}^{\prime} \odot\bar{A},
\end{sequation}
where $\odot$ represents Hadamard Product. Since $\bar{A}$ is a matrix in which all elements are 1 ($\bar{\mathcal{G}}$ is a full connection graph with closed loop), so $\hat{A} = \mathcal{H}^{\prime}$.

In other words, the attack uses a surrogate model to make $\hat{A}$ in Eq. \eqref{eqaac} dynamically changed. Equation~\eqref{eqaac} can be expressed as:
\begin{sequation}
\label{eqaad}
\hat{\mathcal{Y}}_{\kappa} = \mathop{\mathrm{DEC}}_{surrogate}\left[\mathop{\mathrm{ENC}}_{surrogate}(f;\mathcal{H}^{\prime}, \theta^{E}_{\mathcal{T}} );\mathcal{H}^{\prime}, \theta^{D}_{\mathcal{T}}\right],
\end{sequation}
where $\mathop{\mathrm{ENC}}\limits_{surrogate}$ and $\mathop{\mathrm{DEC}}\limits_{surrogate}$ are the encoder and decoder of the surrogate GCN, respectively.

Next, considering that the attacker will not perturb the edge directly connected to $\kappa$ for realized concealed attack, the $\mathcal{H}^{\prime}$ will not directly perturb $\kappa^{\text{th}}$ line of $\bar{A}$, that is, $\mathcal{H}^{\prime}$ in Eq. \eqref{eqaad} can be replaced with $\mathop{\mathcal{H}^{\prime}}\limits_{\kappa}= ( \mathcal{H}^{\prime}_{1}, \cdots , A_{\kappa} , \cdots , \mathcal{H}^{\prime}_{N})^{\top}$, where $\mathcal{H}^{\prime}_{i}$ denotes the $i^{\text{th}}$ row of $\mathcal{H}^{\prime}$. 

Furthermore, in order to ensure the global concealment, attackers need to minimize the amount of perturbation. Specifically, it can be realized by adding a regularization, which can be denoted as $\mathrm{Reg}(\mathop{\mathcal{H}^{\prime}}\limits_{\kappa}) = \mathrm{Sum}(A - \mathop{\mathcal{H}^{\prime}}\limits_{\kappa} \odot\bar{A})$, where $\mathrm{Sum}$ is the sum of all elements of matrix. The regularization can quantify the perturbation degree of $\mathop{\mathcal{H}^{\prime}}\limits_{\kappa}$ to $\bar{A}$.

Hence, the attacker can realize the attack by obtain $\mathop{\mathcal{H}^{\prime}}\limits_{\kappa}$ according to Eq.~\eqref{eqattack1}.
\begin{small}
\begin{multline}
\label{eqattack1}
  \mathop{\arg\min} \limits_{\mathop{\mathcal{H}^{\prime}}\limits_{\kappa}} \sum_{}^{} \Big\{ \mathrm{Loss} \Big[ \hat{\mathcal{Y}}_{\kappa}, \\
 \mathop{\mathrm{DEC}}_{surrogate}( \mathop{\mathrm{ENC}}_{surrogate}(f;\mathop{\mathcal{H}^{\prime}}\limits_{\kappa}, \theta^{E}_{\mathcal{T}} );\mathop{\mathcal{H}^{\prime}}\limits_{\kappa}, \theta^{D}_{\mathcal{T}})\Big]  + \mathrm{Reg}(\mathop{\mathcal{H}^{\prime}}\limits_{\kappa})\Big\}.
\end{multline}
\end{small}
Finally, $\hat{A}$ can be calculated by Eq.~\eqref{eqA}. 

Equation~\eqref{eqattack1} is the mathematical form of the ERA. In Sec.~\ref{seceva}, we will evaluate the performance of Eq.~\eqref{eqattack1} by implementing the attack.

\subsection{The Statement of GCNs Vulnerability}\label{secvulstate}

As can be seen, Eqs.~\eqref{eqattack1} and~\eqref{eq3} are essentially the same, since Eq.~\eqref{eq3} is the unified training method of GCN, hence, Eq.~\eqref{eqattack1} is computationally universal. That is, \textit{if the attacker wants to modify the node category set, it can be realized by another GCN training task}. Since Eq.~\eqref{eqaaa} is the general mathematical formula for attacks and Eq.~\eqref{eqattack1} is derived from Eq.~\eqref{eqaaa}, so we conclude that attackers can precisely control the GCN output by modifying graph edges. In other words, no matter what the values of $\theta^{E}_{\mathcal{T}}$ and $\theta^{D}_{\mathcal{T}}$ are, the attack can be realized by manipulating $\mathcal{E}$. In other words, attackers can bypass $\theta^{E}_{\mathcal{T}}$ and $\theta^{D}_{\mathcal{T}}$ by manipulating $\mathcal{E}$.

In order to bypass the encoder and decoder by manipulating $\mathcal{E}$, Eq.~\eqref{eq10} becomes:
\begin{sequation}
\label{eq15}
\hat{A} = \mathop{\mathbf{ATTACK}}_{essence}\left(\mathcal{E}\right).
\end{sequation}
Equation~\eqref{eq15} shows the essence of the attack target is the exposed graph structure, that is, edges of the graph. As long as the GCN model directly receives the graph edges as the input, the feasible perturbation can be found through Eq.~\eqref{eqattack1} to construct edge-perturbing attacks. In other words, the vulnerability of the GCN is caused by the edge-reading permission.

\begin{figure}[htb]
\centering
\includegraphics[width=7.2cm]{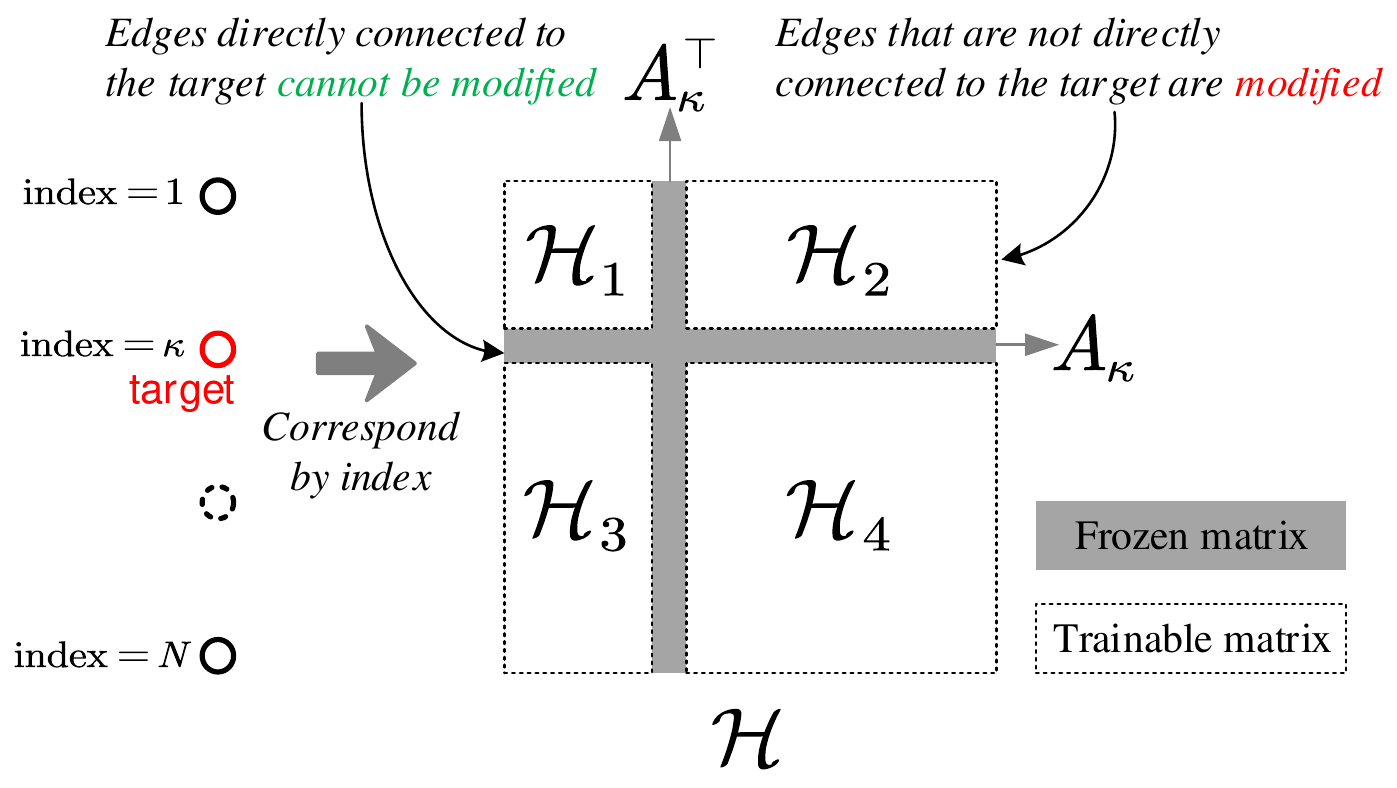}
\caption{Trainable parameter matrix $\mathcal{H}$ under single node attack.} \label{fig1}
\end{figure}
\subsection{Case Study}\label{sec3_3}

In this section, we elaborate on how to attack a specific GCN. We use the semi-GCN~\cite{14} as the specific model to obtain $\mathcal{G}_{victim}$, the obtaining method can be expressed as
\begin{small}
\begin{multline}
\label{eqattacksemi}
  \mathop{\arg\min} \limits_{\mathop{\mathcal{H}^{\prime}}\limits_{\kappa}} \sum_{}^{}\Big\{ \mathrm{Loss} \Big[\hat{\mathcal{Y}}_{\kappa},\\
  \mathrm{Softmax}(\mathop{\mathcal{H}^{\prime}}\limits_{\kappa}\mathrm{Relu}(\mathop{\mathcal{H}^{\prime}}\limits_{\kappa} f W_1 )W_2 )\Big] + \mathrm{Reg}(\mathop{\mathcal{H}^{\prime}}\limits_{\kappa})\Big\},
\end{multline}
\end{small}
where $W_1$ and $W_2$ are well-trained parameters of target GCN in different layers. Next, we will give the details of the the attack, including trainable parameter matrix, layer weight constraint, layer weight initializer and regularizer of the model. Considering that the existing attack scenarios include the single target scenario and the multiple targets scenario, we will describe them separately.

\subsubsection{Single target}

\textit{Trainable parameter matrix.} In order to ensure $\mathop{\mathcal{H}^{\prime}}\limits_{\kappa}= (  \mathcal{H}^{\prime}_{1}, \cdots , A_{\kappa} , \cdots , \mathcal{H}^{\prime}_{N})^{\top}$, we set the trainable parameters $\mathcal{H}$ to 4 sub-matrices: $\mathcal{H}_{1} \in \mathbb{R}^{(\kappa-1)\times(\kappa-1)}$, $\mathcal{H}_{2} \in \mathbb{R}^{(\kappa-1)\times(N-\kappa)}$, $\mathcal{H}_{3} \in \mathbb{R}^{(N-\kappa)\times(\kappa-1)}$, $\mathcal{H}_{4} \in\mathbb{R}^{(N-\kappa)\times(N-\kappa)}$, and they are placed in $\mathcal{H}$ as shown in the Fig.~\ref{fig1}.

\textit{Layer weight constraint.} In order to make the modified $\hat{A}$ only have two elements: 0 and 1, we set up layer weight constraints term:
\begin{sequation}
    \mathcal{H}_{i,j}=\left\{ \begin{array} { l } 0 \ \stt \  A_{i,j} \geq \mathrm{O}_{i,j} \\
    1\  \stt \  A_{i,j} < \mathrm{O}_{i,j} \end{array} \right.,
\end{sequation}
where $\mathcal{H}_{i,j}$ denotes the element in $\mathcal{H}$ with indices as $i$ and $j$. $\mathrm{O}$ denotes parameters before layer weight constraint.

\textit{Layer weight initializer. } Each trainable sub-matrix in $\mathcal{H}$ is initialized as elements, which is the same as the corresponding indices of $A$:
\begin{equation}
    \mathcal{H}^{(0)}_{i,j} = A^{(0)}_{i,j}.
\end{equation}

\subsubsection{Multiple targets}

In the case of multiple targets, let $\mathcal{K} = \{\kappa_1,\ldots,\kappa_N\}$ denote the target node set. 

\textit{Trainable parameter matrix.} The model flexibility of multi-target attack will be limited by setting too many sub-matrices, so all the elements of $\mathcal{H}$ can be trained, and ensuring the concealment of attacks in regularizer.

\textit{Regularizer.} To minimize the number of direct perturbations to the target node, we need to minimize the number of modifications to the $\kappa$-th row of $\bar{A}$. The regularizer for multi-target attack is designed as:
\begin{sequation}
    \mathrm{Sum}\left(A - \mathop{\mathcal{H}^{\prime}}\limits_{\kappa} \odot\bar{A}\right) + \vartheta \sum_{\kappa \in \mathcal{K}}  \mathrm{Sum}\left( h_{\kappa}  \right) ,
\end{sequation}
where $h_{\kappa}$ denotes the $\kappa$-th row of $\mathcal{H}$ and $\vartheta$ denotes the custom coefficient.

\section{AN-GCN: Keep Node Classification Accurate and Anonymous}\label{secisgcn}

In Sec.~\ref{secvul}, the vulnerability of GCN is demonstrated as the edge-reading permission of GCN. In this section, we will withdraw the edge read permission of GCN. We integrate a generator and a self-contained GCN to ensure that all node positions are generated from the noise, to ensure that the classification is only according to the node index and feature, thus makes GCN no longer relying on the information of the edge for classification.

\subsection{How GCNs Locating Nodes}\label{secloc}

Because AN-GCN is capable of generating node positions, in this section, we first state a node localization theorem as the theoretical basis of AN-GCN.

The messages are constantly passing while the GCN training, which also can regarded as a kind of signal transmission, meanwhile, the features of the node are constantly changing. We regard it as an \textit{independent signal vibration}. In this section, we will build a mathematical model of the node signal vibration to figure out how GCN analyzes the given graph structure to locate nodes. In Sec.~\ref{secmath}, the basic mathematical model for node signal is given. In Sec.~\ref{secstartup}, the node signal in the Fourier domain driven by GCN is described, whose processes includes two parts: $\mathit{1}$) The initial position of the node signal in the Fourier domain is given by blocking the transmission of graph signal; $\mathit{2}$) The model of node signal change in the GCN process is given by the introduction of trainable parameters of signal transmission on graphs. Specifically, we first regard the change of each node features as the signal changing with time (in training, it's the form of epoch) $ t $, and then map all the node signals to the same orthogonal basis through Fourier transform. Furthermore, we give the node signal model in GCN training, which is used to find the method of GCN locating specific nodes. The method of building the node signal model is shown in the Fig.~\ref{figfit}. 

\notation~Let $\mathcal{G}=\left(f,\mathcal{E} \right)$ be a graph, where $f$ is set of features of $N$ nodes, while $f(i)$ is the feature of node $i$ and $\mathcal{E}$ denotes the set of edges. An essential operator in spectral graph analysis is the graph Laplacian, whose combinatorial definition is $\Delta=D-A$ where $D\sim\mathbb{R}^{N \times N}$ if the degree matrix and $A\sim\mathbb{R}^{N \times N}$ is the adjacent matrix (Both $D$ and $A$ can be calculated from $\mathcal{E}$). Let the Laplacian matrix of $\mathcal{G}$ is $\Delta$ where eigenvalues are $\lambda_1, \ldots, \lambda_N$, and the corresponding eigen matrix of $\Delta$ is {\small $U=\left( \begin{array} { c c c } u _ { 1 } ( 1 ) & \cdots & u _ { N } ( 1 ) \\ \vdots & \ddots & \vdots \\ u _ { 1 } ( N ) & \cdots & u _ { N } ( N ) \end{array} \right)$}, $u_l=\left(u_l(1),\cdots, u_l(N)  \right)^{\top}$ is the $l$-th eigenvector, $u(l)=\{u_1(l),\ldots,u_N(l)$\}  is the row vector consisting of the values of all eigenvectors at position $l$. For convenience, ``node with the index $n$" is denotes by ``node $n$". $j$ is the imaginary unit.

\subsubsection{Node localization theorem}
Before making a mathematical proof, we first claim that:

\begin{theorem}\label{claim}
Giving a graph $\mathcal{G}$ to be learned by a GCN, the GCN locates the node $\alpha$ according to $u(\alpha)$.
\end{theorem}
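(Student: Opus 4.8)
The plan is to model the training dynamics of the GCN as a time-evolving signal on the graph, and then to use the spectral decomposition of the Laplacian $\Delta$ to isolate exactly which part of that signal carries the positional identity of node $\alpha$. First I would treat the feature of each node as a signal $f(n,t)$ that vibrates as a function of the training epoch $t$, since every round of message passing perturbs the node features according to the adjacency structure encoded in $\Delta$. The governing assumption is that this propagation is driven by $\Delta$ (aggregation is a localized operator built from $A$ and $D$), so the evolution of the node signals is controlled by the same operator whose eigenbasis $U$ defines the graph Fourier transform.

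The second step is to pass to the Fourier domain, which is what ``unifies the nodes' signals to the same orthogonal basis.'' Using $U^{\top}$ as the graph Fourier transform, every node signal is written as a superposition of the eigenmodes $u_1,\ldots,u_N$:
\[
f(n,t)=\sum_{l=1}^{N} c_l(t)\,u_l(n),
\]
where $c_l(t)$ is the spectral coefficient of mode $l$. The crucial structural observation, which I would establish by separation of variables, is that $c_l(t)$ depends only on the eigenvalue $\lambda_l$, the initial spectral content, and the trainable transmission parameters, so it is a \emph{global} quantity shared by all nodes, whereas the only node-dependent factor in each summand is $u_l(n)$. Hence the entire spatial (positional) dependence of node $n$ is concentrated in the collection $\{u_1(n),\ldots,u_N(n)\}$, which is precisely the row vector $u(n)$.

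Third, I would formalize the two regimes flagged earlier. In the blocked-transmission regime the inter-node coupling is suppressed, the signal reduces to its initial localization, and one reads off directly that node $n$ is identified by $u(n)$ alone; this fixes the initial position in the Fourier domain. Re-introducing the trainable parameters switches on the coupling, but since this only rescales or mixes the global coefficients $c_l(t)$ — the frequency responses — it cannot alter the spatial signature $u(n)$. Therefore the identity of node $\alpha$ is held fixed throughout training by $u(\alpha)$, which proves the claim.

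The main obstacle will be the separation step: rigorously showing that the GCN's parameter-dependent update can be cast so that the time/frequency evolution factors cleanly from the spatial eigenvector values, i.e.\ that no node-specific information leaks into the coefficients $c_l(t)$. A secondary subtlety is the non-uniqueness of $U$, since eigenvectors are defined only up to sign and, for repeated eigenvalues, up to an orthogonal rotation within each eigenspace; the statement ``locates $\alpha$ by $u(\alpha)$'' is thus really up to this gauge freedom, so I would either assume the Laplacian spectrum is simple or argue that the localization is invariant under the admissible basis changes.
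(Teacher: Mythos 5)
Your proposal follows essentially the same route as the paper: both model the epoch-wise feature evolution as a Laplacian-driven diffusion $\frac{df}{dt}=-k\Delta f$, solve it in the eigenbasis of $\Delta$ so that the node signal factors into global time-dependent coefficients times the node-specific entries $u_l(\alpha)$, and conclude---via the same blocked-transmission (initial state) versus active-transmission (training) split---that $u(\alpha)$ is the only node-specific quantity in the dynamics, hence the locator. The only substantive differences are that the paper additionally round-trips the signal through a temporal DFT over training epochs and back (its first two propositions) before making the same separation observation, a detour your argument cleanly avoids, and that the paper never addresses the eigenvector sign/rotation gauge freedom you rightly flag as a gap in the statement itself.
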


The proof of Theorem~\ref{claim} is provided in the following Sec.~\ref{secmath} and Sec.~\ref{secstartup}, respectively.

\subsubsection{Node signal model}\label{secmath}

\begin{figure*}[htb]
\centering
\includegraphics[width=0.8\textwidth]{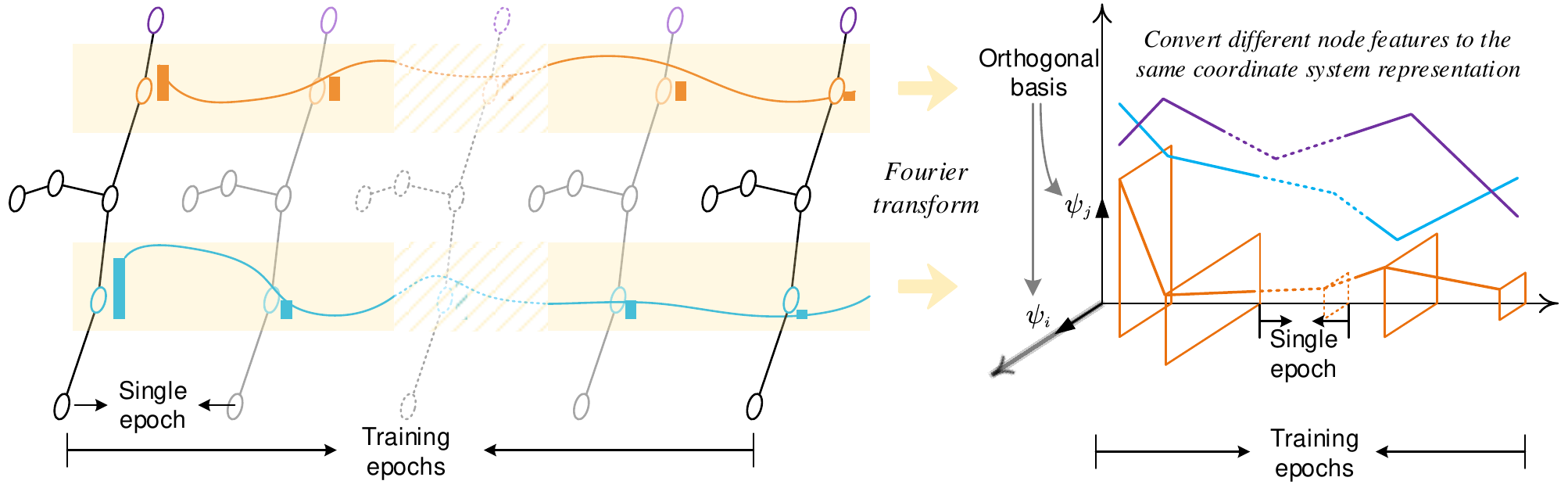}
\caption{The method of building node signal model. We regard the node feature change with the training epoch as the independent signal vibration, and transform all the signals to a unified orthogonal basis by Fourier transform.} \label{figfit}
\vspace{-0.0cm}
\end{figure*}

Firstly, we give the Fourier domain coordinates of the signal of a single node at a specific frequency:

\begin{proposition}
Giving a frequency $\nu$, the coordinates in the Fourier domain of signal of node $\alpha$ are:

\begin{equation}\label{pro1}
\hat{f}_{\alpha}[\nu] = \sum_{t=0}^{E-1}\bar{\theta_{t}}\left(\sum_{i=1}^{N} c_i e^{\lambda_{i}t} u_i(\alpha) \right) e^{-j\frac{2\pi}{E}t\nu},\end{equation}
where $\bar{\theta}_{ t }$ and $c_i$ are constants, $E$ is the number of training epoch.
\end{proposition}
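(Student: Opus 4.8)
The plan is to recognize the stated expression as nothing more than the discrete Fourier transform (DFT) of the epoch-indexed signal carried by node $\alpha$, and then to supply the explicit time-domain form of that signal from the signal-vibration model. First I would fix the definition: treating the feature value of node $\alpha$ as a discrete-time signal $f_{\alpha}(t)$ sampled at the training epochs $t=0,\ldots,E-1$, its Fourier-domain coordinate at frequency $\nu$ is, by the definition of the DFT,
\begin{equation}
\hat{f}_{\alpha}[\nu]=\sum_{t=0}^{E-1} f_{\alpha}(t)\, e^{-j\frac{2\pi}{E}t\nu}.
\end{equation}
Thus the entire task reduces to establishing that $f_{\alpha}(t)=\bar{\theta}_{t}\sum_{i=1}^{N} c_i e^{\lambda_i t}u_i(\alpha)$, after which the claimed identity follows by direct substitution and by keeping $\bar{\theta}_t$ inside the outer $t$-sum.

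To obtain this time-domain form I would model the message passing during training as a linear signal propagation on $\mathcal{G}$ driven by the graph Laplacian, as anticipated in Sec.~\ref{secmath}. Regarding the feature vector $f$ as a graph signal whose continuous evolution obeys the first-order Laplacian dynamics $\dot{f}=\Delta f$, and diagonalizing in the eigenbasis $\{u_i\}$ by writing the initial signal as $f(0)=\sum_{i=1}^{N} c_i u_i$ with constants $c_i$, the flow separates mode by mode so that $f(t)=\sum_{i=1}^{N} c_i e^{\lambda_i t}u_i$. Reading off the entry at position $\alpha$ then yields the intrinsic node vibration $\sum_{i=1}^{N} c_i e^{\lambda_i t}u_i(\alpha)$, which is precisely the step that places every node's signal on the common orthogonal basis $U$.

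Next I would account for the trainable part of the GCN. Since each training step scales the propagated signal by a learnable weight, the signal actually observed at node $\alpha$ and epoch $t$ is the intrinsic vibration modulated by a per-epoch constant $\bar{\theta}_t$, i.e. $f_{\alpha}(t)=\bar{\theta}_{t}\sum_{i=1}^{N} c_i e^{\lambda_i t}u_i(\alpha)$. Substituting this into the DFT above gives exactly Eq.~\eqref{pro1}, completing the argument.

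The main obstacle will be justifying the signal-vibration model itself rather than the Fourier bookkeeping, which is routine. Concretely, I must argue that (i) the epoch-wise feature evolution is adequately captured by the linear Laplacian flow $\dot{f}=\Delta f$, so that the exponentials $e^{\lambda_i t}$ legitimately appear; (ii) the coefficients $c_i$ are genuine constants determined only by the fixed initial signal and independent of $\alpha$; and (iii) the learned weights collapse into a single scalar modulation $\bar{\theta}_t$ per epoch rather than a node-dependent factor. Steps (i) and (iii) are the delicate ones, as they encode the assumptions that the trainable parameters act uniformly across the spectrum and that the decomposition onto $U$ is preserved throughout training; I would make these assumptions explicit before invoking the eigendecomposition.
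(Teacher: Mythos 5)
Your proposal follows essentially the same route as the paper's own proof: both start from the DFT of the epoch-indexed node signal, obtain the time-domain form $\sum_{i=1}^{N} c_i e^{\lambda_i t}u_i(\alpha)$ by solving the Laplacian transmission dynamics ($\frac{df}{dt}=-k\Delta f$ in the paper) through its eigendecomposition, insert the per-epoch trainable scalar $\bar{\theta}_t$ as a modulation, and substitute back into the DFT. The only cosmetic differences are that the paper inserts $\bar{\theta}$ inside the Fourier sum rather than into the time-domain signal (an equivalent operation) and invokes the general solution of a first-order matrix ODE where you diagonalize directly; the delicate modeling assumptions you flag at the end are indeed the ones the paper adopts implicitly.
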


\begin{proof}
The signal variation of a single node with $E$ epochs is presented as $f[0]$, $f[1]$,...,$f[E-1]$. According to the Discrete Fourier Transform (DFT) theory, given a frequency $\nu$, the fourier transform for the signal $f$ of node $\alpha$ (before introducing trainable parameters) is
\begin{equation}
    \hat{f}_{\alpha}[\nu]=\sum^{E-1}_{ t =0}f_{\alpha}[ t ]e^{-\frac{2\pi}{E}\nu t  j},
\end{equation}
In the Fourier domain, according to the aggregation theory on graph~\cite{31}, the signal in Fourier domain is dynamically vibrating caused by trainable parameters, so here we introduce a constant matrix representing vibration frequencies $\bar{\theta}\sim\mathbb{R}^{E}$ where $\bar{\theta}_i$ is the value of $\bar{\theta}$ at the $i^{\text{th}}$ node, by act $\bar{\theta}$ on all frequency $\hat{f}[\nu], \nu=0:E-1$ at every epoch. The signal fitted by $\bar{\theta}$ will be obtained, as shown in the Fig.~\ref{figFlow}, specifically.

\begin{figure}[htb]
\centering
\includegraphics[width=0.4\textwidth]{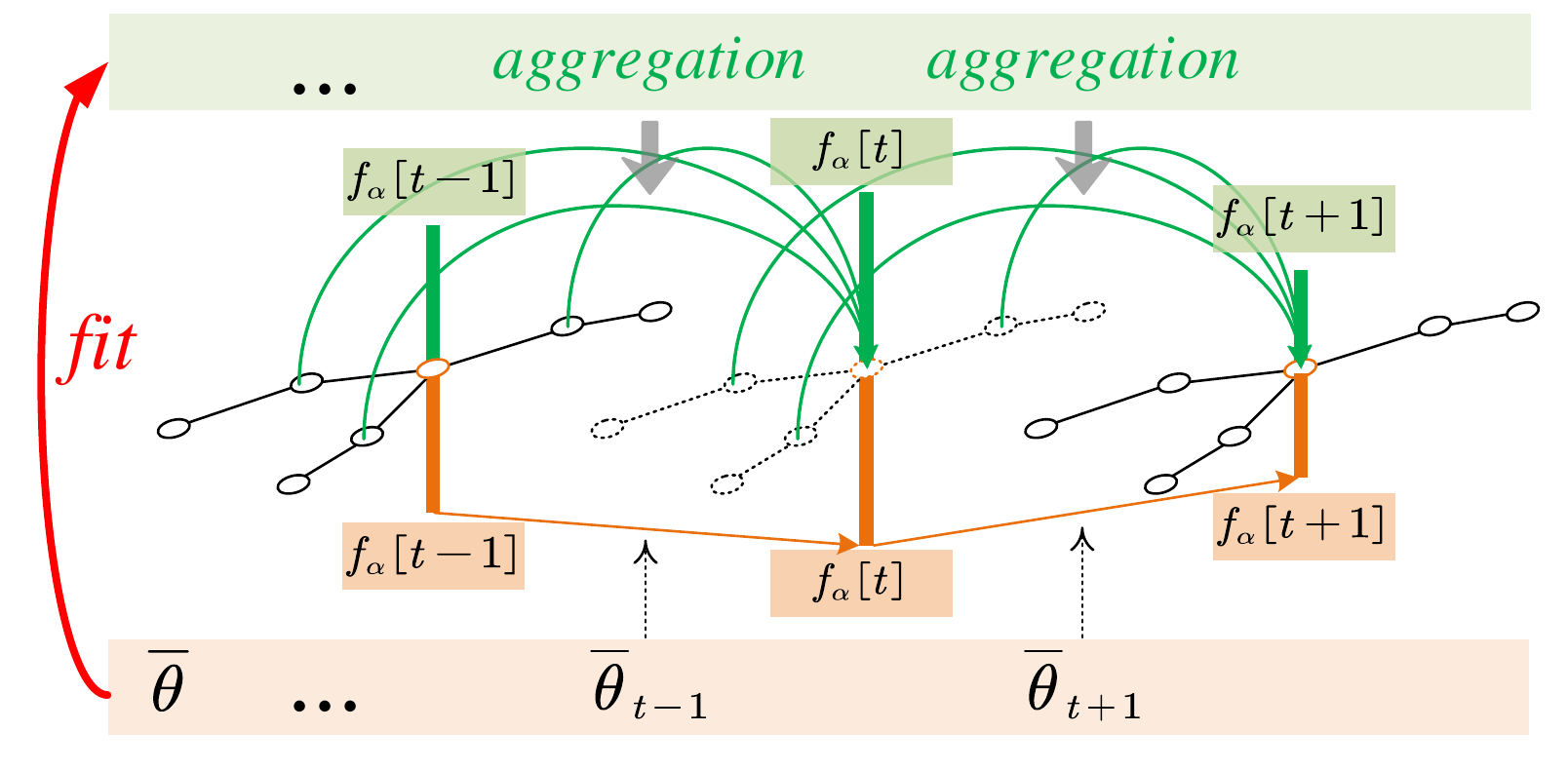}
\caption{Using $\bar{\theta}$ to fit graph quantify aggregation.} \label{figFlow}
\vspace{-0.0cm}
\end{figure}

Hence the Fourier domain graph signal (after introducing trainable parameters) is
\begin{equation}\label{proo2}
        \hat{f}_{\alpha}[\nu]=\sum^{E-1}_{ t =0}\bar{\theta}_i f_{\alpha}[ t ]e^{-\frac{2\pi}{E}\nu t  j}.
\end{equation}
Furthermore, according to the model of signal transmission on the graph~\cite{56}, the transmission of signal $f$ on $\mathcal{G}$ is proportional to $\Delta$ acting on $f$ with time $t$. As the end-to-end training drives the signal transmission on the graph, by regarding $t$ as the end-to-end training epoch $t$, so 
\begin{equation}\label{dfdt}
    \frac{df}{dt} = -k\Delta f,
\end{equation}
where $k$ is a constant. The first-order matrix ordinary differential equation~\cite{65,66} is
\begin{equation}\label{mod}
    \dot{\chi}(t) = \mathscr{L}(t)\chi(t),
\end{equation}
where $ \dot{\chi}$ is the vector of first derivatives of these functions, and $\mathscr{L}(t)$ an $N\times N$ matrix of coefficients. In the case where $\mathscr{L}(\cdot)$ is constant and has $N$ linearly independent eigenvectors, the general solution of Eq.~\eqref{mod} is
\begin{equation}
\chi(t) = c_1 e^{\dot{\lambda}_1 t}\rho_1 + c_2 e^{\dot{\lambda}_2 t}\rho_2 + \ldots + c_N e^{\dot{\lambda}_N t}\rho_N,
\end{equation}
where $\dot{\lambda}_1, \ldots, \dot{\lambda}_N$ are the eigenvalues of $\mathscr{L}$, $\rho_1, \ldots, \rho_N$, as $\Delta\sim\mathbb{R}^{N\times N}$ has $N$ linearly independent eigenvectors eimilarly, so Eq.~\eqref{dfdt} is homogeneous to Eq.~\eqref{mod} that has the general solution, which is given as
\begin{equation}\label{ft}
    f_{\alpha}[t] = \sum_{i=1}^{N} c_ie^{\lambda_i t}u_i(\alpha),
\end{equation}
Eq.~\eqref{ft} can be regard as the global change model of graph signal transmission, hence Eq.~\eqref{pro1} can be acquired by substituting  Eq.~\eqref{ft} into Eq.~\eqref{proo2}.
\end{proof}

Then, for node $\alpha$, we integrate the signals of all frequencies in the Fourier domain to obtain the original signal in the orthogonal basis representation, which is stated in the following proposition.

\begin{proposition}\label{proposition1}
The signal of node $\alpha$ in epoch $t$ is:
\begin{multline}\label{pro2}
    f_{\alpha}[t] = \sum_{\nu=0}^{E-1}\frac{2}{E} \left| 
    \hat{f}_{\alpha}[\nu]
    \right|
    \cos \Big[ \left(  { 2 \pi\nu} / { E \epsilon } \right)  t  \epsilon + \\
    \arg ( \hat{f}_{\alpha}[\nu] ) \Big] e^{j\frac{2\pi}{E}t\nu} 
    ,
\end{multline}
where $\arg$ is the argument of a complex number, $\epsilon$ denotes the epoch interval, here we take it as a minimal value, that is, $\epsilon \to 0$.
\end{proposition}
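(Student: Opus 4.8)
The plan is to recognize Eq.~\eqref{pro2} as the inverse Discrete Fourier Transform (IDFT) of the coefficients $\hat{f}_{\alpha}[\nu]$ computed in the preceding proposition (Eq.~\eqref{pro1}), merely re-expressed in magnitude--phase form. First I would invoke the DFT synthesis equation, which is the exact inverse of the analysis equation~\eqref{proo2} that defines $\hat{f}_{\alpha}[\nu]$, namely $f_{\alpha}[t]=\tfrac{1}{E}\sum_{\nu=0}^{E-1}\hat{f}_{\alpha}[\nu]\,e^{j\frac{2\pi}{E}t\nu}$. This step is nothing but the completeness and orthogonality of the discrete Fourier basis $\{e^{j2\pi\nu t/E}\}_{\nu=0}^{E-1}$ over $t=0,\ldots,E-1$: summing the analysis coefficients against the conjugate basis returns the original samples, so no new content is needed beyond the standard inversion theorem.

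Next I would push each coefficient into polar form, writing $\hat{f}_{\alpha}[\nu]=\lvert\hat{f}_{\alpha}[\nu]\rvert\,e^{j\arg(\hat{f}_{\alpha}[\nu])}$, and then merge its phase with the synthesis exponent. Applying Euler's identity to the combined phase $\tfrac{2\pi}{E}t\nu+\arg(\hat{f}_{\alpha}[\nu])$ is what produces the $\cos[\cdot]$ factor appearing in Eq.~\eqref{pro2}. The coefficient $\tfrac{2}{E}$ (in place of the bare $\tfrac{1}{E}$ of the raw IDFT) is where the real-signal structure must be used: since the node feature signal $f_{\alpha}[t]$ is real-valued, its spectrum is conjugate-symmetric, $\hat{f}_{\alpha}[E-\nu]=\overline{\hat{f}_{\alpha}[\nu]}$, so each frequency and its mirror combine into a single real cosine of twice the amplitude. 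This pairing is precisely what converts $\tfrac{1}{E}$ into $\tfrac{2}{E}$ and collapses the complex synthesis sum onto its real part.

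Finally I would reconcile the frequency notation through the epoch interval $\epsilon$: treating the discrete epoch index as the continuous instant $t\epsilon$ and writing the angular frequency as $2\pi\nu/(E\epsilon)$, so that $\left(2\pi\nu/(E\epsilon)\right)(t\epsilon)=2\pi\nu t/E$ reproduces the IDFT phase identically, with the limit $\epsilon\to 0$ recording that the epoch grid is refined toward a continuum. Substituting this identity into the cosine argument yields Eq.~\eqref{pro2} in the stated form.

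I expect the main obstacle to be the rigorous bookkeeping behind the factor $\tfrac{2}{E}$ and the index range. Naively multiplying the cosine form by the residual exponential $e^{j\frac{2\pi}{E}t\nu}$ will overcount unless the conjugate-symmetric pairing is tracked carefully, and the DC term $\nu=0$ (together with the Nyquist term $\nu=E/2$ when $E$ is even) must be treated as a special case that is \emph{not} doubled. Confirming that the residual exponential, combined with extraction of the real part, reproduces exactly the closed form rather than double-counting the mirrored half of the spectrum is the delicate point; everything else is a routine application of the DFT inversion theorem and Euler's formula.
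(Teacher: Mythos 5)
Your proposal follows essentially the same route as the paper's proof: invoke the DFT inversion formula, pair each frequency $\nu$ with its mirror $E-\nu$ using the conjugate symmetry $\hat{f}_{\alpha}[E-\nu]=\hat{f}^{*}_{\alpha}[\nu]$ of a real-valued signal, and convert each paired contribution to magnitude--phase (cosine) form via Euler's formula, which is exactly how the factor $2/E$ arises in both arguments. The ``delicate point'' you flag---the residual exponential $e^{j\frac{2\pi}{E}t\nu}$, the risk of double-counting the mirrored half of the spectrum, and the special handling of the DC/Nyquist terms---is genuine, but the paper does not resolve it either: its final step sums the already-paired contributions $\mathcal{C}^{\nu}_{\alpha}[t]$ over the full range $\nu=0,\ldots,E-1$ while multiplying by that same extra exponential, so your attempt reproduces the paper's argument, including the bookkeeping it leaves unaddressed.
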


\begin{proof}
According to the Discrete Fourier Transform (DFT) theory, the inverse transform of $ \hat{f}_{\alpha}[\nu]$ is 
\begin{equation}\label{inverse}
    f_{\alpha}[t] = \frac{1}{E}\sum_{\nu=0}^{E-1} \hat{f}_{\alpha}[\nu] e^{j\frac{2\pi}{E}\nu t},
\end{equation}
i.e., the inverse matrix is $\frac{1}{N}$ times the complex conjugate of the original (symmetric) matrix. Note that the $\hat{f}_{\alpha}[\nu]$ coefficients are complex. We can assume that the graph signal $f[t]$ values are real (this is the simplest case, there are situations in which two inputs, at each $t$, are treated as a complex pair, since they are outputs from $0^{\circ}$ and $90^{\circ}$ demodulators). In the process of taking the inverse transform for the term $\hat{f}_{\alpha}[\nu]$ and $\hat{f}_{\theta}[E - \nu]$ (the spectrum is symmetrical about $\frac{E}{2}$~\cite{51}) combine to produce 2 frequency components, only one of which is considered to be valid. Hence from the Eq.~\eqref{inverse}, the contribution to $f_{\alpha}[t]$ of $\hat{f}_{\alpha}[\nu]$ and $\hat{f}_{\alpha}[E - \nu]$ is:
\begin{equation}\label{contri1}
    \mathcal{C}^{\nu}_{\alpha}[ t ] = \frac{1}{E}\left(\hat{f}_{\alpha}[\nu]e^{j\frac{2\pi}{E}\nu t} + \hat{f}_{\alpha}[E - \nu]e^{j\frac{2\pi}{E}(E-\nu) t} \right).
\end{equation}
Consider that all graph signals $f[ t ]$ are real, so
\begin{equation}
    \hat{f}_{\alpha}[E - \nu] = \sum_{t=0}^{E-1}f[t]e^{-j\frac{2\pi}{E}(E-\nu)t},
\end{equation}
and according to the Euler's formula~\cite{67}, \begin{equation}\label{euler}
    e^{-j 2\pi t} = \frac{1}{\cos{(2\pi t)} + j\sin{(2\pi t)}} = 1,\stt\  t \in \bm{\mathrm{N}_{+}},
\end{equation}
where $\bm{\mathrm{N}_{+}}$ is the set of all positive integers. We have
\begin{multline}
\label{eqe1}
    e ^ { - j \frac { 2 \pi } { E } ( E - \nu ) t } = \underbrace { e ^ { - j 2 \pi t } } _ { 1 \text { for all } t } e ^ { + j \frac { 2 \pi \nu } { E } t } = e ^ { + j \frac { 2 \pi } { E } \nu t },\\
    \ie \ \hat{f}_{\alpha}[E - \nu] =  \hat{f}_{\alpha}^{*}[\nu],
\end{multline}
where $\hat{f}_{\theta}^{*}[\nu]$ is the complex conjugate, substituting Eq.~\eqref{eqe1} into Eq.~\eqref{contri1}, as $e^{j 2\pi t}=1$ (refer to Eq.~\eqref{euler}) we have
\begin{align}
\label{fvt1}
     \mathcal{C}^{\nu}_{\alpha} & = \frac{1}{E}\left(\hat{f}_{\alpha}[\nu] e^{j\frac{2\pi}{E}\nu t} + 
     \hat{f}_{\alpha}^{*}[\nu]e^{-j\frac{2\pi}{E}\nu t} \right) \notag \\
  &   = \frac{2}{E} \left[\re(\hat{f}_{\alpha}[\nu])\cos \left({2\pi \nu t}/{E}\right) - \im (\hat{f}_{\alpha}[\nu]) \sin \left({2\pi \nu t}/{E} \right) \right] \notag \\
  & = \frac { 2 } { E } | F [ \nu ] | \cos \left[ \left(  { 2 \pi\nu} / { E \epsilon } \right)  t  \epsilon + \arg ( F [ \nu ] ) \right],
\end{align}
where $\re(\cdot)$ and $\im(\cdot)$ denote taking the real part and the imaginary part of a imaginary number, respectively. Hence, by integrating the signals of all frequencies $\nu = 0:E-1$
\begin{equation}\label{sumfreq}
    f_{\alpha}[t] = \sum_{\nu = 0}^{E-1}\mathcal{C}^{\nu}_{\alpha}[t]e^{j\frac{2\pi}{E}t\nu} .
\end{equation}
Equation~\eqref{pro2} can be obtained by substituting Eq.~\eqref{fvt1} into Eq.~\eqref{sumfreq}.
\end{proof}

\subsubsection{Initial and training state of the node}\label{secstartup}
We build the GCN process on the mathematical model given by Sec.~\ref{secmath}. In this section, we first obtain the initial state of the graph signal by blocking the signal transmission, which is stated in the following proposition.

\begin{proposition}\label{proposition2}
The initial signal of node $\alpha$ is
\begin{equation}\label{eqpro3}
f_{\alpha}[0]=  2\bar{\theta}_0 \mathrm{Sum}\left[ u(\alpha) \right],
\end{equation}
\end{proposition}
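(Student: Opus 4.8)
The plan is to evaluate the orthogonal-basis signal of Proposition~\ref{proposition1} at the initial epoch $t=0$ and then use discrete-Fourier orthogonality to \emph{block} every term of the transmission history except the one at $t=0$, which is exactly what ``blocking the transmission of the graph signal'' should mean. Thus I would take Eq.~\eqref{pro2} as the starting point rather than re-deriving anything from scratch.

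First I would specialize Eq.~\eqref{pro2} to $t=0$. Setting $t=0$ makes the outer modulation $e^{j\frac{2\pi}{E}t\nu}$ equal to $1$ and collapses the cosine argument to $\arg(\hat{f}_{\alpha}[\nu])$, so each summand becomes $\frac{2}{E}|\hat{f}_{\alpha}[\nu]|\cos(\arg\hat{f}_{\alpha}[\nu])=\frac{2}{E}\re(\hat{f}_{\alpha}[\nu])$, giving
\begin{equation}
f_{\alpha}[0]=\frac{2}{E}\sum_{\nu=0}^{E-1}\re\left(\hat{f}_{\alpha}[\nu]\right).
\end{equation}
This already exposes the factor $2$ appearing in Eq.~\eqref{eqpro3}; it is inherited directly from the symmetric-spectrum pairing carried out in Eq.~\eqref{fvt1}, so no new argument is needed for it.

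Next I would substitute the explicit Fourier coordinate Eq.~\eqref{pro1} and interchange the two finite sums, pulling the sum over the frequency index $\nu$ inside. The inner sum is the geometric series $\sum_{\nu=0}^{E-1}e^{-j\frac{2\pi}{E}t\nu}$, which by the standard root-of-unity orthogonality equals $E$ when $t\equiv 0 \pmod E$ and vanishes otherwise; over the range $t\in\{0,\dots,E-1\}$ only $t=0$ survives, so the entire transmission history is ``blocked'' except the initial epoch. The surviving term is $\bar{\theta}_{0}\sum_{i=1}^{N}c_i e^{\lambda_i\cdot 0}u_i(\alpha)=\bar{\theta}_{0}\sum_{i=1}^{N}c_i u_i(\alpha)$, whose real part is itself because $c_i u_i(\alpha)$ is real. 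Multiplying by $\frac{2}{E}\cdot E$ yields
\begin{equation}
f_{\alpha}[0]=2\bar{\theta}_{0}\sum_{i=1}^{N}c_i u_i(\alpha),
\end{equation}
and invoking the uniform initial condition of the Laplacian flow \eqref{dfdt} (before any transmission the modal weights are unity, $c_i=1$) identifies $\sum_{i=1}^{N}u_i(\alpha)=\mathrm{Sum}[u(\alpha)]$ and delivers Eq.~\eqref{eqpro3}.

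I expect the main obstacle to be the bookkeeping around the constant factors rather than any deep analytic step. One must keep the factor $2$ from the symmetric-spectrum representation consistent with the $\frac{1}{E}$ normalization of the inverse DFT, verify that the orthogonality sum genuinely isolates $t=0$ on the index set $\{0,\dots,E-1\}$, and justify reducing the modal constants $c_i$ to $1$ from the initial (pre-transmission) state of the graph-signal dynamics. A secondary point of care is the real-valuedness of $f_{\alpha}$ used when replacing $|\hat{f}_{\alpha}[\nu]|\cos(\arg\hat{f}_{\alpha}[\nu])$ by $\re(\hat{f}_{\alpha}[\nu])$; this is the same assumption already adopted in Proposition~\ref{proposition1}, so it can be imported rather than re-established.
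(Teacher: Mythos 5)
Your proof is correct within the paper's framework, but it takes a genuinely different route from the paper's own proof. The paper does three things: (i) it first argues that the GCN forward propagation in the $d=1$ case, Eq.~\eqref{eqd_1}, is homogeneous with the ODE solution Eq.~\eqref{ft}, so that the signal model applies to GCN training at all; (ii) it ``blocks transmission'' by choosing $c_i=e^{-\lambda_i t}$, which forces $f_\alpha[0]=\cdots=f_\alpha[E-1]$; and (iii) it then degenerates to the single-epoch case $E=1$, $t=0$ and substitutes Eq.~\eqref{pro1} into Eq.~\eqref{pro2}, the factor $2$ arising from $\frac{2}{E}$ with $E=1$. You instead keep $E$ arbitrary, evaluate Eq.~\eqref{pro2} at $t=0$, and use root-of-unity orthogonality $\sum_{\nu=0}^{E-1}e^{-j\frac{2\pi}{E}t\nu}=E\,\delta_{t,0}$ to isolate the initial epoch, then normalize $c_i=1$ as the pre-transmission modal weights. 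The two blocking conventions agree --- the paper's $c_i=e^{-\lambda_i t}$ evaluated at $t=0$ is exactly your $c_i=1$ --- so your normalization is consistent with the paper's, and both are equally conventions rather than derivations of what ``initial, non-transmitting'' means. Your route buys two things: the formula is shown to hold at $t=0$ for every $E$ rather than only in the degenerate $E=1$ collapse (where the DFT is trivial), and it avoids the paper's formally awkward device of letting the ``constants'' $c_i$ depend on $t$. What you omit, and the paper includes inside this same proof, is the homogeneity argument connecting the signal model to actual GCN forward propagation (Eqs.~\eqref{eqgeneralGCN}--\eqref{eqd_1}); that material really serves the broader Theorem~\ref{claim} rather than the literal statement of Proposition~\ref{proposition2}, so its absence is not a gap for this statement, but you should be aware the paper treats it as part of establishing that the proposition is meaningful for GCNs.
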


\begin{proof}
According to the general GCN forward propagation~\cite{52}, the embedding feature of $\mathcal{G}$ is
\begin{equation}\label{eqgeneralGCN}
    f^e[\cdot]=\sigma \left( U g _ { \theta } ( \Lambda ) U ^ { \top } f \right),
\end{equation}
where $g _ { \theta } ( \Lambda ) = \mathrm{Diag}\left(\theta_1,\ldots,\theta_N \right)$ is the trainable diagonal matrix, hence, the output of GCN for node $\alpha$
is 
\begin{equation}\label{eqsimplify}
    f^e_{\alpha}[\cdot]=u(\alpha) g _ { \theta } ( \Lambda ) U ^ { \top } {f}.
\end{equation}

Now we simplify Eq~\eqref{eqsimplify}. We denote $ g _ { \theta } ( \Lambda ) U ^ { \top }$ as a constant matrix (note that so far, message passing is not activated yet, i.e., the trainable parameters can be regarded as constants) $\Phi \in \mathbb{R}^{N\times d}$ where $d$ is the dimension of the feature, $\phi_i$ is the $i^{\text{th}}$ row of $\Phi$, $\phi_i(k)$ is the $k^{\text{th}}$ element of $\phi_i$. 
In $d=1$ case, 
\begin{equation}\label{eqd_1}
     f^e_{\alpha}[\cdot]= \sum_{i=1}^{N} \phi_{1}(i)u_i(\alpha),\ \stt,\ d=1,
\end{equation}
in $d>1$ case,
\begin{equation}
    f^e_{\alpha}[\cdot] = \left[\sum_{i=1}^{N} \phi_{1}(i)u_i(\alpha),\ldots,\sum_{i=1}^{N} \phi_{d}(i)u_i(\alpha)\right],\ \stt,\ d>1.
\end{equation}
Since our aim is to solve the qualitative problem of node localization, we only give the case of $d=1$, as the discovery making from observation: Eq.~\eqref{eqd_1} is homogeneous with Eq.~\eqref{ft}, moreover $\phi_i$ and $c_i$ both denote the changeable parameters, we conclude that Proposition~\ref{proposition1} and Proposition~\ref{proposition2} are applied to the training phase of GCN. 

We consider the initial state as the non-transmission state, that is, for node $\alpha$, $f_{\alpha}[0]=\ldots=f_{\alpha}[E-1]$, hence, the transmission of graph signal can be blocked by letting $c_i=e^{-\lambda_i t}$. By regarding the initial status of GCN as $E=1$, $t=0$, Eq.~\eqref{eqpro3} can be obtained by substituting Eq.~\eqref{pro1}
into Eq.~\eqref{pro2}, \stt, $E=1$ and $t=0$.
\end{proof}

Next, we active the message passing on $\mathcal{G}$. As $\bar{\theta}$ denotes constants, Eq.~\eqref{pro1} can be concluded by
\begin{equation}\label{eqpro1a}
    \hat{f}_{\alpha}[\nu] = \sum_{t=0}^{E-1}\left(\sum_{i=1}^{N} \bar{c}_i e^{\lambda_{i}t} u_i(\alpha) \right) e^{-j\frac{2\pi}{E}t\nu},
\end{equation}
where $\bar{c}_i=\bar{\theta}_t c_i$. By substituting Eq.~\eqref{eqpro1a} into
Eq.~\eqref{pro2}, the final expression of node signal is:
\begin{multline}\label{eqfinal}
       f_{\alpha}[t] =\lim_{\epsilon \to 0}\Bigg\{ \sum_{\nu=0}^{E-1}\frac{2}{E} \Bigg| 
    \sum_{t=0}^{E-1}\Big(\sum_{i=1}^{N} \bar{c}_i e^{\lambda_{i}t} \underbrace{u_i(\alpha)}_{\text{from } \mathcal{E}} \Big) e^{-j\frac{2\pi}{E}t\nu}
    \Bigg|\\
    \cos \left[ \left(  { 2 \pi\nu} / { E \epsilon } \right)  t  \epsilon + \arg ( \hat{f}_{\alpha}[\nu] ) \right] e^{j\frac{2\pi}{E}t\nu} \Bigg\}.
\end{multline}

Equation~\eqref{eqfinal} gives the signal model of all nodes in the unified reference frame, by observing, for node $\alpha$, among all the initial conditions related to $\mathcal{G}$, only $u(\alpha)$ appears in the process of node signal change. In other words, in the elastic system brought by end-to-end training, only $u(\alpha)$ is a controllable factor, so we simplify the above equation to
\begin{equation}
    f_{\alpha}[t] = \mathcal{F}\left[ u(\alpha) \right],
\end{equation}
and the feature vibrate of node $\alpha$ under the end-to-end training is
\begin{align}\notag
     &\texttt{Initial: }\ \  2\bar{\theta}_0 \mathrm{Sum}\left[ u(\alpha) \right]\\
    &\texttt{Training: } \mathcal{F}\left[ u(\alpha) \right] \notag.
\end{align}
Thus, the feature vibrate of node $\alpha$ driven by GCNs can be quantified, and it can always contain a fixed factor $u(\alpha)$ while message passing, as GCNs quantify the message passing in training phase is part according to the input $\Delta$, GCNs deconstruct $\Delta$ to Laplacian matrix $U$ and further quantify the feature vibrate of node $\alpha$ by $u(\alpha)$, i.e., GCNs locate node $\alpha$ according to $u(\alpha)$.

\subsection{Basic Model for AN-GCN}
According to Theorem~\ref{claim}, we denote each row of the Laplacian matrix as an independent generating target. We use the spectral graph convolution (without any constraints and simplification rules)~\cite{22} as the basic model, and take it as an encoder. We use an extra fully connected neural network decoder. That is, denote the basic forward propagation as:
\begin{subequations}\label{eqbeforegen}
\begin{equation}\label{eqbeforegenA}
f^e = \sigma \left( U g _ { \theta } ( \Lambda ) U ^ { \top } f \right)
\end{equation}
\begin{equation}\label{eqbeforegenB}
y_{out} = f^e W^D.
\end{equation}
\end{subequations}

In Eq.~\eqref{eqbeforegen}, $U$ contains the information of the edges in the graph, we will replace it with a matrix generated from Gaussian noise. Eq.~\eqref{eqbeforegen} is the basic signal forward propagation. Next, we will describe how to improve Eq.~\eqref{eqbeforegen}, thus we can let the base model accommodate the generated node position.

\subsection{Generating the Node Position From Noise}\label{secgen}

If $u(n)$ is completely generated by noise, the specific points will keep anonymous. Thus we take $u(n)$ as the generation target. The output of the generator is denoted as $u^G(n)$, which tries to approximate the underlying true node position distribution $u(n)$.

Next, we define a probability density function (PDF) for input noise. In order to enable the generator to locate a specific point, the input noise of the generator will be constrained by the position of the target point. So we define the input noise as having a PDF equal to that of Staggered Gaussian Distribution, the purpose is to make the noise not only satisfy the Gaussian distribution but also do not coincide with each other, thus letting the generated noises orderly distributed on the number axis.

\begin{proposition}[Staggered Gaussian Distribution]
\label{pro4}
. Giving a minimum probability $\varepsilon$, $N$ Gaussian distributions centered on $x=0$ satisfy $
\mathrm { P } ( x , n ) \sim \operatorname { Norm } ( 2 \sigma ( 2 n - N - 1 ) \sqrt { \log ( \sqrt { 2 \pi } \sigma \varepsilon ) } , \sigma^2 )
$, so that the probability density function of each distribution is greater than $\varepsilon$, where $\mathrm{ Norm }$ is the Gaussian distribution, $n$ is the node number, $\sigma$ is the standard deviation, and $\varepsilon$ is the set minimum probability.
\end{proposition}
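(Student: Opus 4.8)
The plan is to treat this as a direct construction: I start from the definition of a single Gaussian density and impose two geometric requirements — a ``width at level $\varepsilon$'' requirement that forces each bump to rise above the threshold $\varepsilon$, and a ``non-coincidence'' (stagger) requirement that forces consecutive bumps apart — and then read off the closed form for the centers. The whole statement is really a placement claim, so once the two conditions are quantified the means should fall out by elementary algebra plus a symmetry normalization.

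First I would write the density of the $n$-th distribution as $\mathrm{P}(x,n)=\frac{1}{\sqrt{2\pi}\sigma}\exp\!\big(-\frac{(x-\mu_n)^2}{2\sigma^2}\big)$ with unknown centers $\mu_n$ and common variance $\sigma^2$, and ask where it crosses the threshold. Solving $\mathrm{P}(x,n)=\varepsilon$ for $x$ is the one genuine computation: taking logarithms linearizes the exponential and yields $|x-\mu_n| = \sigma\sqrt{-2\log(\sqrt{2\pi}\sigma\varepsilon)}$, so each density exceeds $\varepsilon$ exactly on an interval of half-width $w(\varepsilon)=\sigma\sqrt{-2\log(\sqrt{2\pi}\sigma\varepsilon)}$ about $\mu_n$. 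At this point I would flag the feasibility condition $\varepsilon<\frac{1}{\sqrt{2\pi}\sigma}$, i.e. $\sqrt{2\pi}\sigma\varepsilon<1$: it is exactly what makes $\log(\sqrt{2\pi}\sigma\varepsilon)$ negative so the square root is real, and it is also what guarantees the peak value $\frac{1}{\sqrt{2\pi}\sigma}$ sits above $\varepsilon$ in the first place.

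Second, I would impose staggering: to keep the $N$ effective intervals from coinciding while packing them tightly on the axis, I set the gap between consecutive centers to a fixed multiple of $w(\varepsilon)$, so that $\mu_{n+1}-\mu_n$ is constant in $n$. Then I use the one remaining degree of freedom — the overall offset — by demanding the ensemble be symmetric about $x=0$ (``centered on $x=0$''), which forces the centers into the arithmetic progression $\mu_n \propto (2n-N-1)$, whose coefficients run symmetrically from $1-N$ to $N-1$ in steps of two. Combining the spacing with this symmetry normalization delivers the stated $\mu_n = 2\sigma(2n-N-1)\sqrt{-\log(\sqrt{2\pi}\sigma\varepsilon)}$, and a final substitution back into $\mathrm{P}(x,n)$ verifies both advertised properties: each density lies above $\varepsilon$ on its own interval, and distinct intervals do not overlap.

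I expect the main obstacle to be stating the stagger condition precisely enough to pin down the proportionality constant linking $\mu_{n+1}-\mu_n$ to $w(\varepsilon)$ — the width computation itself is routine once the logarithm is taken, but whether ``do not coincide'' means the $\varepsilon$-level intervals merely abut, cross at height $\varepsilon$, or remain strictly disjoint changes that constant, so I would fix the convention up front and check that the algebra lands on the claimed coefficient. A secondary but necessary bookkeeping point is the sign inside the radical: since $\sqrt{2\pi}\sigma\varepsilon<1$ the quantity $\log(\sqrt{2\pi}\sigma\varepsilon)$ is negative, so I must carry the factor $-\log(\cdot)=\log\!\big(1/(\sqrt{2\pi}\sigma\varepsilon)\big)$ throughout to keep every square root real and the final expression well defined.
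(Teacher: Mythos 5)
Your route is the same as the paper's: solve $h(x)=\varepsilon$ for the level-crossing points of a single Gaussian, call the half-width of the super-level interval $r$, make consecutive intervals abut (the paper states this exactly as $\max(x_p)=\min(x_{p+1})$, ``staggered and densely arranged''), and symmetrize about $x=0$ to get the arithmetic progression $\mu_n=(2n-N-1)r$. The substantive difference is that your width computation is the correct one, and it exposes two slips in the paper's own proof. The paper asserts $x_p=\mu_p\pm 2\sigma\sqrt{\log(\sqrt{2\pi}\sigma\varepsilon)}$, i.e.\ it pulls $2\sigma^2$ out of the radical as $2\sigma$; the correct factorization is $\sqrt{2\sigma^2\,(-\log(\sqrt{2\pi}\sigma\varepsilon))}=\sqrt{2}\,\sigma\sqrt{-\log(\sqrt{2\pi}\sigma\varepsilon)}$, exactly your $w(\varepsilon)$. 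The paper also leaves the (negative) quantity $\log(\sqrt{2\pi}\sigma\varepsilon)$ under the root rather than its negation, so its radicand is negative precisely when the construction makes sense, i.e.\ when $\varepsilon<1/(\sqrt{2\pi}\sigma)$ --- the feasibility condition you rightly flag and the paper never states. This settles the worry in your last paragraph, but not in the way you hoped: no choice of stagger convention rescues the stated coefficient. Under the paper's own abutting convention your correct algebra lands on $\mu_n=\sqrt{2}\,\sigma(2n-N-1)\sqrt{-\log(\sqrt{2\pi}\sigma\varepsilon)}$, so the $2$ versus $\sqrt{2}$ and the sign inside the logarithm are errors inherited by the proposition itself, not gaps in your argument; your paragraph-two claim that the algebra ``delivers the stated'' formula is the one inaccurate sentence in your proposal, and your paragraph-three hedge is the accurate assessment. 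Treat your derivation as the proof, and read the statement with the constant corrected to $\sqrt{2}\,\sigma$ and the radicand written as $\log\bigl(1/(\sqrt{2\pi}\sigma\varepsilon)\bigr)$.
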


\begin{proof}
Given a probability density function $h(x_p)$ of the Gaussian distribution $\mathrm{Norm}(\mu_p,\sigma^2)$, when $h(x_p)=\varepsilon$, 
\begin{equation}
    x_p=\mu_p \pm 2\sigma \sqrt{\log{\left(\sqrt{(2\pi)}\sigma\varepsilon\right)}}.
\end{equation}
Let $ 2\sigma \sqrt{\log{(\sqrt{(2\pi)}\sigma\varepsilon)}}=r$ as the distance from the average value $\mu_p$ to maximum and minimum value of $x_p$. Specify that each $x_p$ represents the noise distribution of each node. In order to make all the distributions staggered and densely arranged, stipulate $\mathrm{max}(x_p)=\mathrm{min}\left(x_{p+1}\right)$, and keep all distributions symmetrical about $x = 0$. So , when the total number of nodes is $N$, $\mu_1=(1-N)r$, $\mu_2=(3-N)r$, $\ldots$, $\mu_N=(N-1)r$, that is, $\mu_n=(2n-N-1)r=2 \sigma ( 2 n - N - 1 ) \sqrt { \log ( \sqrt { 2 \pi } \sigma \varepsilon ) }$.
\end{proof}




The process of generating sample $u^G(v)$ from Staggered Gaussian noise $Z_v\sim P(x,v)$ is denoted as $u^G(v)=G\left(Z_v \right)$, and $U$ generated by $G$ is denoted as $U^{G}$. The process of generator is shown in Fig.~\ref{fig4}.
\begin{figure}
\centering
\includegraphics[width=7cm]{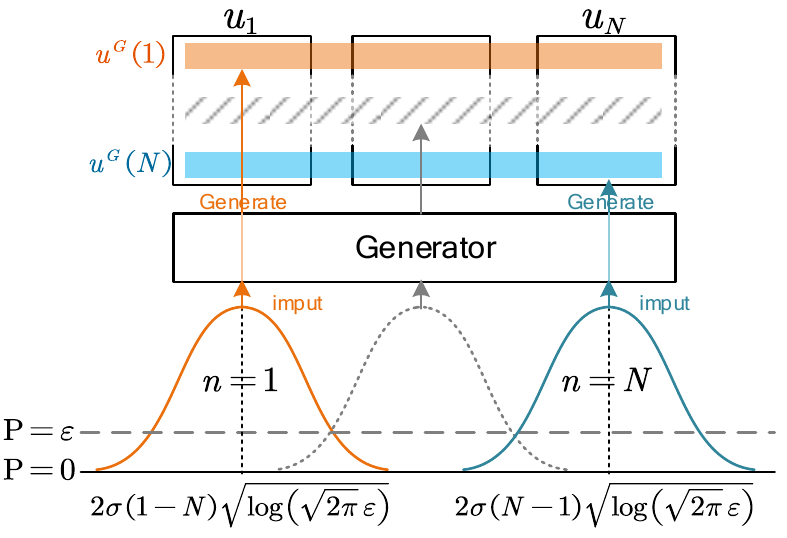}
\caption{Generator with Staggered Gaussian distribution as input.} \label{fig4}
\end{figure}

\subsection{Detecting the Generated Position}\label{secdis}
\begin{figure*}[htb]
\centering
\includegraphics[width=0.70\textwidth]{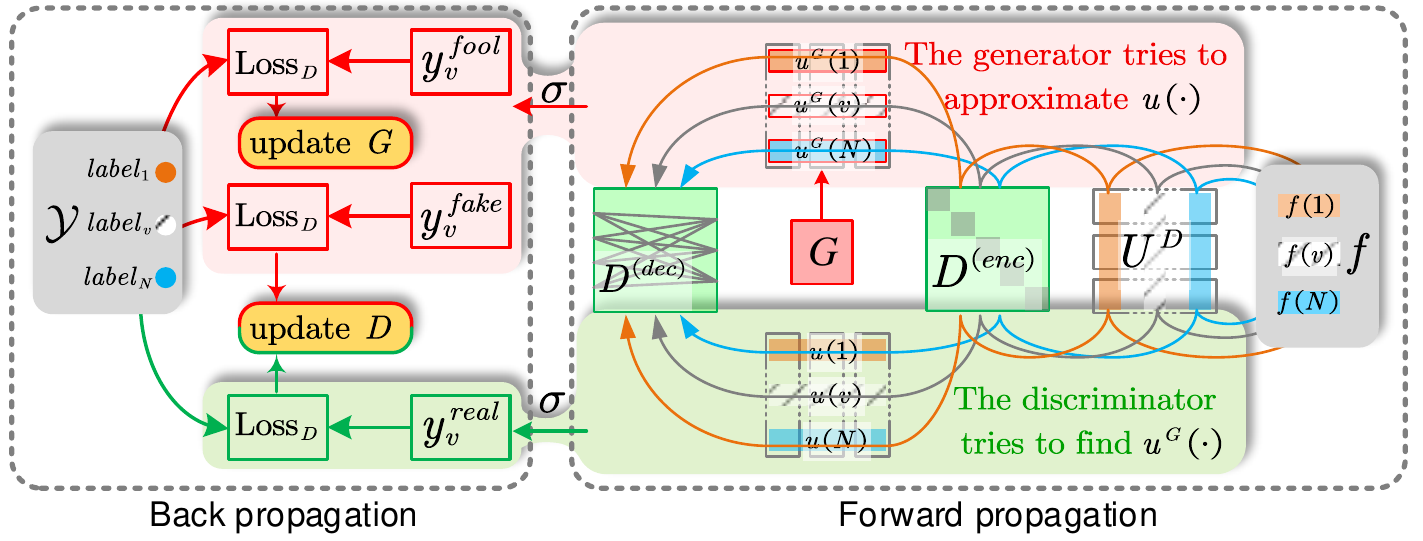}
\caption{The schematic illustration of AN-GCN.}
\label{fig5}
\end{figure*}
\begin{algorithm}[htb]
\caption{AN-GCN}
\label{ag1}
{\begin{algorithmic}[1]
\Require Weights of $D$: $\theta^{D}$,include $ g_\theta^{D,enc}(\Lambda)$ is used for encoding, $\theta^{D,dec}$ is used for decoding. Weights of $G$: $\theta^G$. Training epoch of $D$ and $G$.
\State Initialize $U^D=U$
\For{epoch in Train epochs for $D$}
    \State Random sample a node $v$
    \State Obtain noise $Z_v$ from ${ P } ( z , v )$
    \State $u^G(v) \gets G(Z_v^D)$ // \textit{Generate} 
    \State $U^D_{\cdot v} = U^D_{\cdot v} + q\left(u^G(v)-U^D_{\cdot v}\right)$ // \textit{Linear approximation}
    \State Calculate fake label of $v$ by Eq.~\eqref{eqfakerealA}
    \State Calculate real label of $v$ by Eq.~\eqref{eqfakerealB}
    \State Calculate $\nabla_{update}^D$ by Eq.~\eqref{eqgrad}
    \State $\theta^{D} \gets \theta^{D}-\nabla_{update}^D$
    \For{epoch in Train epochs for $G$}
        \State Obtain noise  $Z_v$ from ${ P } ( z , v )$
        \State Calculate $y^{fool}_v$ by Eq.~\eqref{eqfool}
        \State Calculate $\nabla_{update}^G$ by Eq.~\eqref{eqgradG}
        \State $\theta^{G} \gets \theta^{G}-\nabla_{update}^G$
    \EndFor
\EndFor
\Ensure Trained generator weights $\theta^G$.
\end{algorithmic}}
\end{algorithm}

After proposing the generation of $u^G(n)$, we need to set discriminator $D$ to evaluate the quality of $u^G(n)$ generated by generator $G$, and design the two-player game in order to make $u^G(n)$ can complete the accurate node classification.

To ensure the accuracy of classification, we use classification quality as an evaluation indicator to drive the entire adversarial training, so in the game process, $D$ can not only distinguish the adversarial samples generated by $G$ (non-malicious, used for anonymize node), but also ensure the accuracy of node classification through the adversarial training with the classification accuracy as the final optimization objective. Specifically, $D$ is divided into two parts: a diagonal matrix with trainable parameters $D_{\text{enc}}$ used for encode (embed) the graph, and a parameter matrix $D_{\text{dec}}$ used for decode, their are $ g_\theta^{D,dec}(\Lambda)$ and $\theta^{D,enc}$, respectively. Thus the forward propagation becomes
\begin{equation}\label{equut}
    f^e = \sigma \left( U D_{\text{enc}} U ^ { \top } f \right) D_{\text{dec}}.
\end{equation}

Consider there are $U$ and $U^{\top}$ in the encoder (Eq.~\eqref{eqbeforegenA}), we mark them as different matrices:
\begin{equation}\label{eqmark}
    f^e = \sigma \left( U^{G} D_{\text{enc}} U ^ {D} f \right)D_{dec},
\end{equation}
that is, the corresponding $u(\cdot)$ to $U^G$ is generated by independent generation of different rows, and the $U^D$ on the right is approximate linearly according to the numerical change of left $U$. Consider that $U^{G}$ and $U^{D}$ changed dynamically while training, we use $U^{G,e}$ and $U^{D,e}$ to denote $U^{G}$ and $U^{D}$ in epoch $e$ respectively. While generate row $l$ of $U^G$, the corresponding column of $U^D$ (denote as $u^{D}_l$) will linearly approximate towards $u^{G}(l)$. Specifically, the general term formula for its value in training epoch $e$ is 
$$u^{D,e}_l=\left\{ \begin{array} { l } u_l,\ \stt \ e=1 \\ u^{D,e-1}_l+q\left(u^{G,e}(l)-u^{D,e-1}_l\right),\ \stt \ e>1 \end{array} \right.,$$
where $u^{G,e}(l)$ is the generated $u^{G,e}(l)$ in epoch $e$, $q$ is the custom linear approximation coefficient. Thus, $U^{D,e}$ is gradually approaching the generator matrix $U^{G,e}$ during the training phase. With the increase of training epochs, the collaboration among two $U$ in Eq~\eqref{equut} (i.e., $U$ and $U^{\top}$) and $G$ are shown in the Fig.~\ref{figUUT}.

\begin{figure}[htb]
\centering
\includegraphics[width=0.4\textwidth]{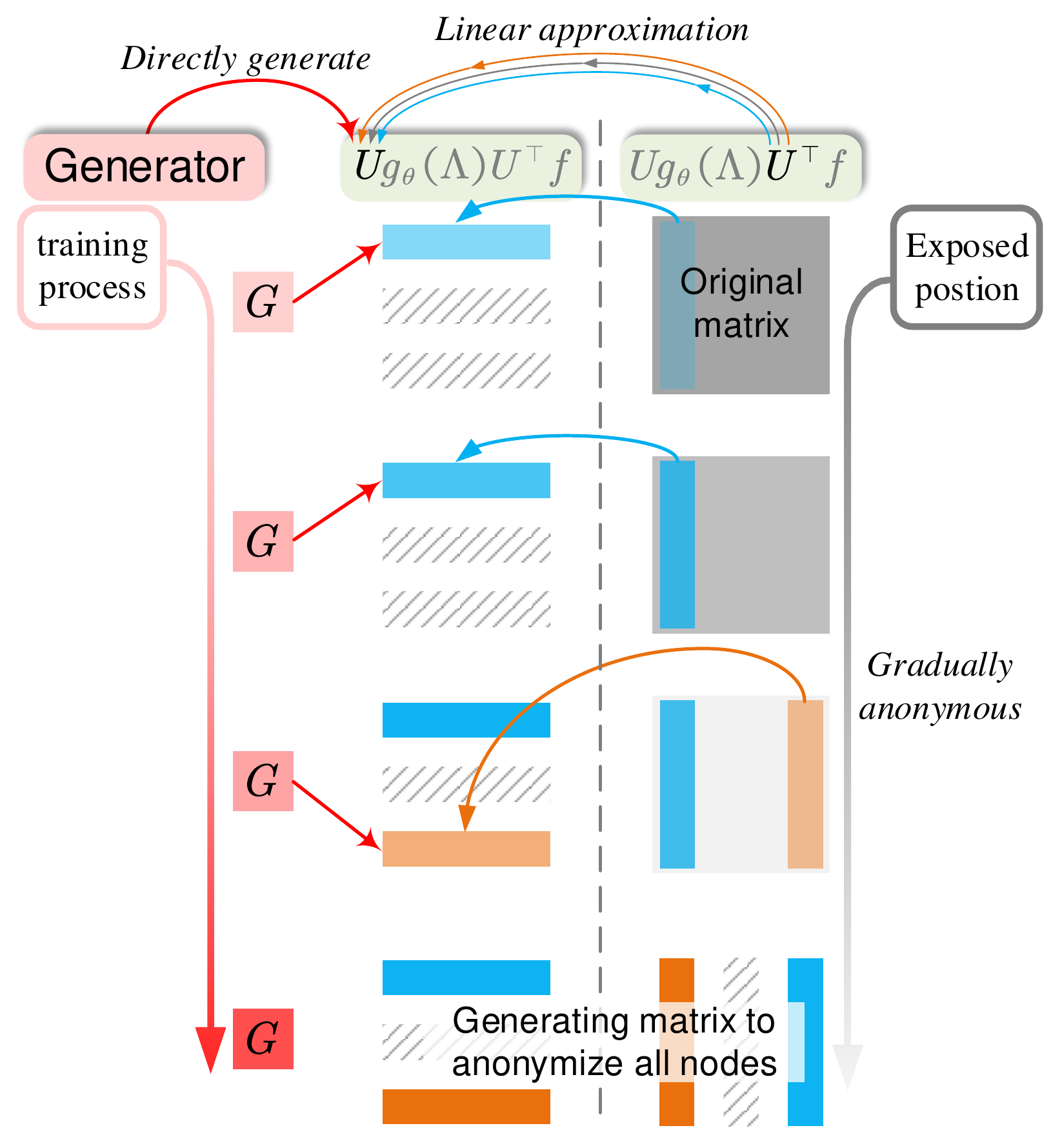}
\caption{In the training phase, $U^D$ gradually approximates to $U$ linearly, so as to realize anonymity.}\label{figUUT}
\end{figure}

After all, the forward propagation of the basic model is improved as Fig.~\ref{figeq}.

\begin{figure}[htb]
\centering
\includegraphics[width=0.4\textwidth]{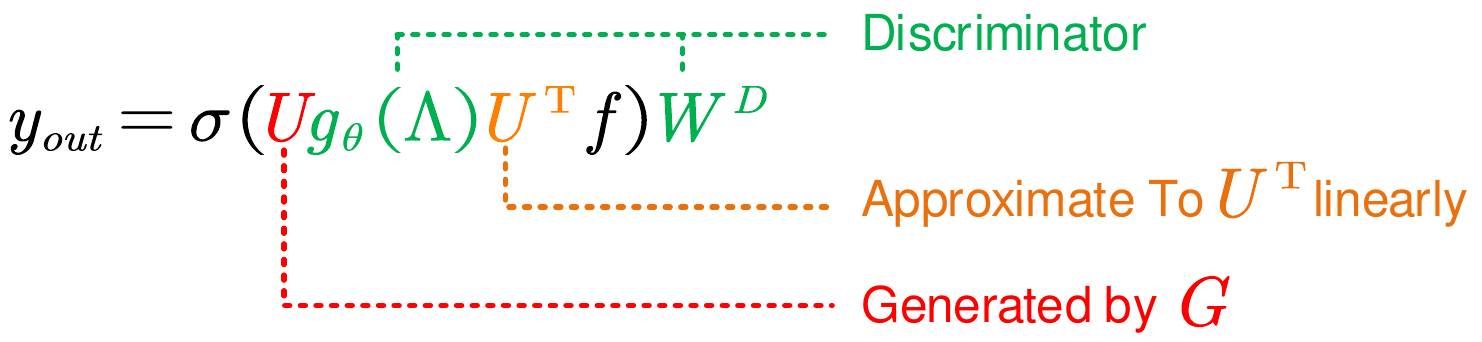}
\caption{Forward propagation of AN-GCN.}\label{figeq}
\end{figure}

So far, we have eliminated the edge information stored in AN-GCN. Next, we will elaborate on how to make the nodes participate in the training phase anonymously, and the optimization of generator and discriminator, so as to ensure that both sides of the adversarial training have the clear objectives.

\subsubsection{Anonymous training}

The process of node anonymous forward propagation is as follows. Let $f ^ {e,G} ( v ) $ and $f ^ {e,D} ( v )$ denote the embedding of the node $v$ when using $u^G(v)$ and $u(v)$ locate nodes at epoch $e$, respectively. In each training epoch $e$, first we obtain $u^G(v)$ and $U^{D,e}$ corresponding to the target node $v$, that is, generate the $u^G(v)$ from the Staggered Gaussian noise (Proposition.\ref{pro4}) corresponding to node $v$, and update the corresponding column of $U^{D,e-1}$ to $U^{D,e}$. Second, we use $D_{\text{enc}}$ and $D_{\text{dec}}$ to evaluate the quality of this epoch of generators, by getting the node embedding $f^{e,G}(v)$ of $v$ through $D_{\text{enc}}$, decode $f^{e,G}(v)$ through $D_{\text{dec}}$ to get the label possibility. 

Consider we need to calculate specific node embedding, based on Eq.~\eqref{eqbeforegenA}, the embedding of node $v$ is
\begin{equation}\label{eqlocate}
    f^e(v) = \sigma \left( u(v) g _ { \theta } ( \Lambda ) U ^ { \top } f \right).
\end{equation}

\subsubsection{Discriminator optimization}

Discriminator optimization aiming at reducing the classification accuracy of the generator, while improving the classification accuracy of the discriminator. The classification accuracy is quantified by the classification probability (i.e., soft label). For the node $v$ with generated position, the soft label of node $v$ is denoted as $y^{fake}_v$, which can be calculated according to the generated node position $u^G(v)$, the linear approximation matrix $U^D$ and the discriminator of current epoch. After that, in order to make the discriminator detect the node $v$ (which with generate position), the real label $y^{real}_v$ is calculated by the general GCN~\eqref{eqgeneralGCN}. Thus, $y^{fake}_v$ and $y^{real}_v$ are given as:
\begin{subequations}
\begin{equation}
     y^{fake}_v= u^G(v) D_{\text{enc}} U^D f(v)D_{\text{dec}} \label{eqfakerealA},
\end{equation}
\begin{equation}
    y^{real}_v= u(v) D_{\text{enc}} U f(v) D_{\text{dec}}\label{eqfakerealB}.
\end{equation}
\end{subequations}


In the training phase of discriminator, in order to detect nodes with generated position, the discriminator not only enhances its ability of correct classification, but also classifies nodes with generated position into other random categories, so as to reduce the performance of the generator. In the discriminator training phase, the discriminator will try to classify the normal nodes as correct labels, and the generated nodes as wrong labels. Specifically, in the discriminator training phase, the discriminator will try to classify the normal nodes as correct labels and the nodes with generated positions as wrong labels. The performances of the generator and discriminator are quantified by the loss function, thus, we need to calculate the loss functions of $y^{real}_v$ and $y^{fake}_v$ respectively. The loss function of $D$ is designed as:
\begin{equation*}
\mathrm{Loss}_D(y)=\left\{\begin{array} { l } \mathbb{C}[\mathbb{S}(y),label_v],\stt\  y=y^{real}_v\\ \mathbb{C}[\mathbb{S}(y),\mathrm{SP}(\left\{label_\gamma|\gamma\neq v \right\})],\stt\  y= y^{fake}_v \end{array} \right.,
\end{equation*}
where $label_i$ represents the true label of node $i$ (one-hot), $\mathbb{C}$ denotes Cross entropy function, $\mathbb{S}$ denotes Sigmoid function, $\mathrm{SP}$ denotes Random sampling. 

Finally, according to the loss $\mathrm{Loss}_D(y)$, we update the weight of $D$ according to the gradient:
\begin{equation}\label{eqgrad}
    \nabla_{update}^D = \nabla_{g_\theta^{D,dec}(\Lambda),\theta^{D,enc}}\big[\mathrm{Loss}_D(y^{real}_v)+ \mathrm{Loss}_D(y^{fake}_v)\big].
\end{equation}

\subsubsection{Generator optimization}

Next, aiming that letting $u^{G}(v)$ generated by $G$ can provide accurate classification, we train $G$. After re-sample noise $Z_v$, the label used to fool $D$ is calculated by
\begin{equation}\label{eqfool}
    y^{fool}_v = G(Z_v) D_{\text{enc}} U^D f D_{\text{dec}},
\end{equation}
in generation epoches, the generator attempts to classify the node $v$ as the correct label, thus the loss function of $D$ is designed as:
\begin{equation}
    \mathrm{Loss}_D(y)=\mathbb{C}\big[\mathbb{S}(y),label_v\big],\stt\ y=y^{fool}_v.
\end{equation}

Finally, in order to make the output of $G$ be correctly classified by $D$, we update the weight of $G$ according to the output of $D$, note that $D$ is frozen in the training phase of the generator.
\begin{equation}\label{eqgradG}
    \nabla_{update}^G = \nabla_{\theta^G}\big[\mathrm{Loss}_D(y^{fool}_v)\big]
\end{equation}


The schematic illustration of AN-GCN  is represented by Fig.~\ref{fig5},  and the algorithm process of AN-GCN is given in the Algorithm ~\ref{ag1}.
\begin{figure*}[htb]
\centering
\includegraphics[width=\textwidth]{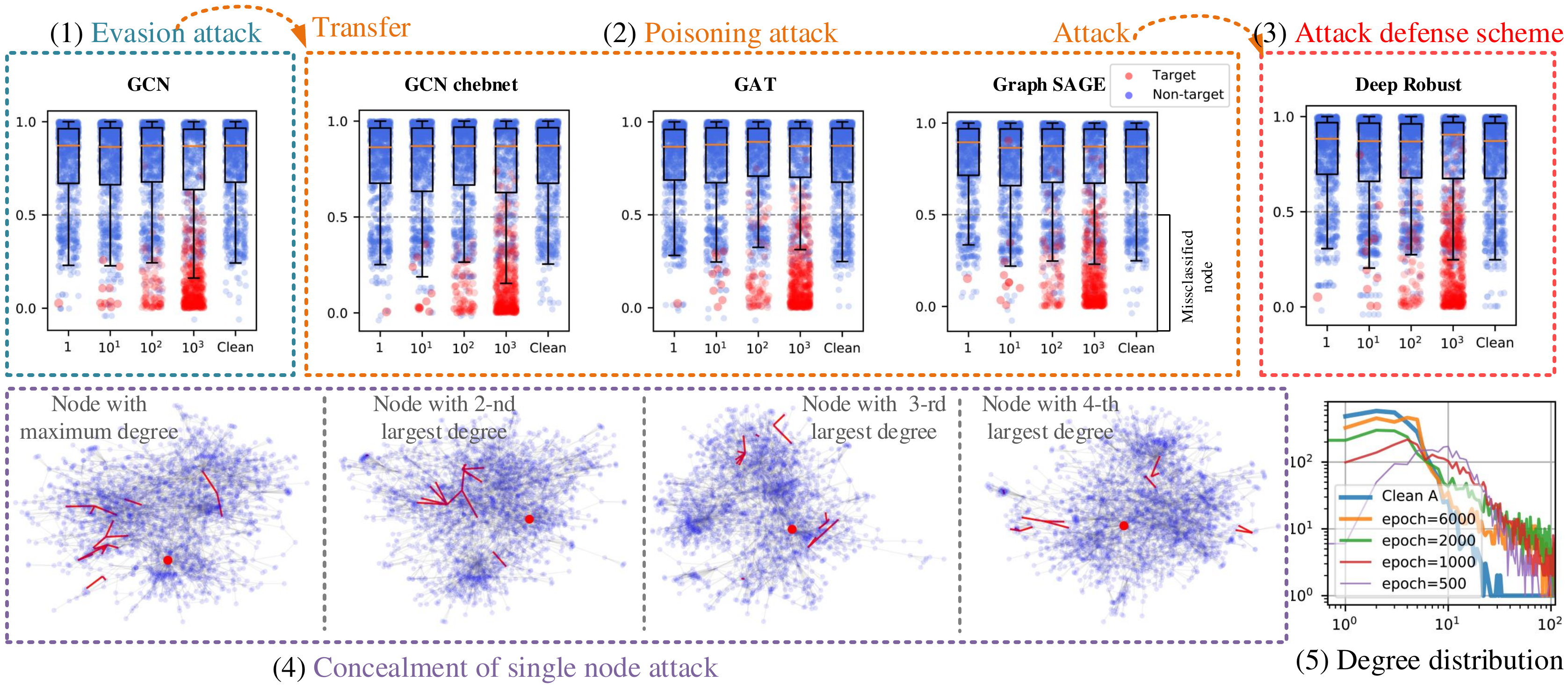}
\caption{The evaluation results of G-EPA (Eq.~\eqref{eqattack1}). 1. Evaluation $\mathit{1}$) The horizontal axis is the number of attack nodes, and the vertical axis is the scaling index according to the loss function. The more similar the distribution of non-target nodes and clean graph, the less misclassified non-target nodes will be. It can be seen that with the increase of the number of attacks, almost all the target nodes are misclassified, while the non-target nodes almost keep the original classification results. 2. Evaluation $\mathit{2}$) After $\mathcal{G}_{victim}$ was transferred to different models and retrained, the performance of each model was evaluated. It can be seen that all models are successfully attacked, which shows that the attack method represented by Eq.~\eqref{eqattack1} can transfer to other models. 3. Evaluation $\mathit{3}$) it can be seen that a large number of target nodes are misclassified, thus prove the defense ability of the model decreases. 4. Evaluation $\mathit{4}$) The red dot is the target node, and the red line is the modified connection. As can be seen that we have not directly modified the node, and the total number of perturbations is insignificant, so the attack has well performance on concealment. 5. Evaluation $\mathit{5}$) With the increase of training epoch, the degree distribution is more and more like the degree distribution of clean $A$, that is to say, $\mathcal{G}_{victim}$ will eventually tend to be stable.}
\label{figaa}
\end{figure*}
\subsection{Security Analysis}
In the training phase, as $U^D$ gradually approximates to $U$ linearly, the forward propagation of AN-GCN can be
\begin{sequation}\label{eqFPANGCN}
 \mathcal{Y} = \left[ \begin{array}{ l } G(Z_1)\\ \cdots \\ G(Z_N) \end{array}\right]  g _ { \theta } ( \Lambda ) \left[G(Z_1)^{\top},\cdots,G(Z_N)^{\top}\right]f D_{dec},
\end{sequation}hence we simplify Eq.~\eqref{eqFPANGCN} to
\begin{sequation}\label{eqsimFP}
 \mathcal{Y} = \mathbf{AN\text{-}GCN}\left(\mathbb{N};\theta^G,\theta^D\right),
\end{sequation}where $\mathbb{N}=\{1,\ldots,N\}$ is the node indices set. Comparing to the forward propagation of the existing GCNs (Eqs.~\eqref{eq1} and~\eqref{eqaag}), the position of nodes in AN-GCN are invisible, i.e., AN-GCN can anonymize nodes, thus disables edge perturbations for attackers.

\section{Evaluation}\label{seceva}
We report results of experiments with two widely used benchmark data sets for node classification, which include CORA~\cite{62,23}, CITESEER~\cite{64}, either used with extensive research of graph adversarial attack and defense~\cite{59,20,61}.

\subsection{Evaluation for the General Edge-Perturbing Attacks}

By observing the effectiveness of the attack model, we further prove that the exposure of node position is the main reason for graph vulnerability (Eq.\eqref{eq15}).

The evaluation for the attack effect of Eq.~\eqref{eqattack1} includes five items: $\mathit{1}$) The effectiveness of the attack (represented by box plot, Let Eq.~\eqref{eqattack1} act on the Semi-GCN~\cite{14} which trained on the clean graph to get $\mathcal{G}_{victim}$, evaluate the classification of $\mathcal{G}_{victim}$). $\mathit{2}$) Transferability of attack(transfer $\mathcal{G}_{victim}$ to the training set of GraphSAGE, GAT, GCN ChebNet~\cite{22}, thus get the $\mathcal{G}_{victim}$ and evaluate the classification performance of $\mathcal{G}_{victim}$ by various model). $\mathit{3}$) Attack effect on the existing mainstream defense scheme (Deep Robust~\cite{37}). The process is the same as that of evaluation $\mathit{1}$), and the target model becomes Deep Robust), $\mathit{4}$) Concealment of a single node attack. The evaluation index is expressed as to whether there is an obvious perturbation in the 5th order neighborhood of the target node. $\mathit{5}$) The stability of graph structure after the attack (represented by degree distribution~\cite{2}). The results are shown in Fig.~\ref{figaa}. In particular, the results of the evaluation item $\mathit{1}$),$\mathit{2}$) and $\mathit{3}$) above are presented in Table~\ref{tab1}.
\begin{table}[htb]
    \centering
    \caption{the results of the evaluation item $\mathit{1}$,$\mathit{2}$ and $\mathit{3}$, Success rate represents the probability that the target node is successfully misclassified}\label{tab1}
\includegraphics[width=5.0cm]{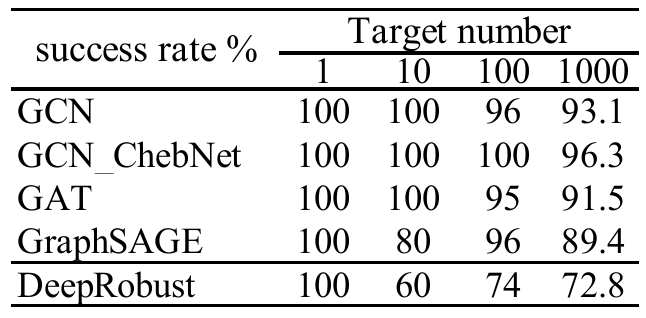}
\vspace{-0.5cm}
\end{table}


\subsection{Numerical experiment for the theoretical basis of AN-GCN}

Since Theorem~\ref{claim} is the theoretical basis of AN-GCN, in this section, we will verify the correctness of the Theorem~\ref{claim} by numerical experiment. We design two experiments to verify Theorem~\ref{claim}. In the first experiment, we observe the position deviation of nodes in the embedded space after the disturbance is applied to the corresponding $u(\cdot)$, if the deviation of node $v$ is significantly larger than that of other points while applied to $u(v)$, it shows that $u(v)$ is highly sensitive to the position of $v$. In the second experiment, we observe the change for the $u(\cdot)$ of neighbors after deleting a node. If with the increase of neighbor order (distance from the deleted node), the change of corresponding $u(\cdot)$ shows less significance, then it further shows that the specific location of $v$ is closely related to the value of $u(v)$. Both experiments prove the theoretical correctness of the Theorem~\ref{claim} collectively.

\begin{figure}
\centering
\includegraphics[width=0.5\textwidth]{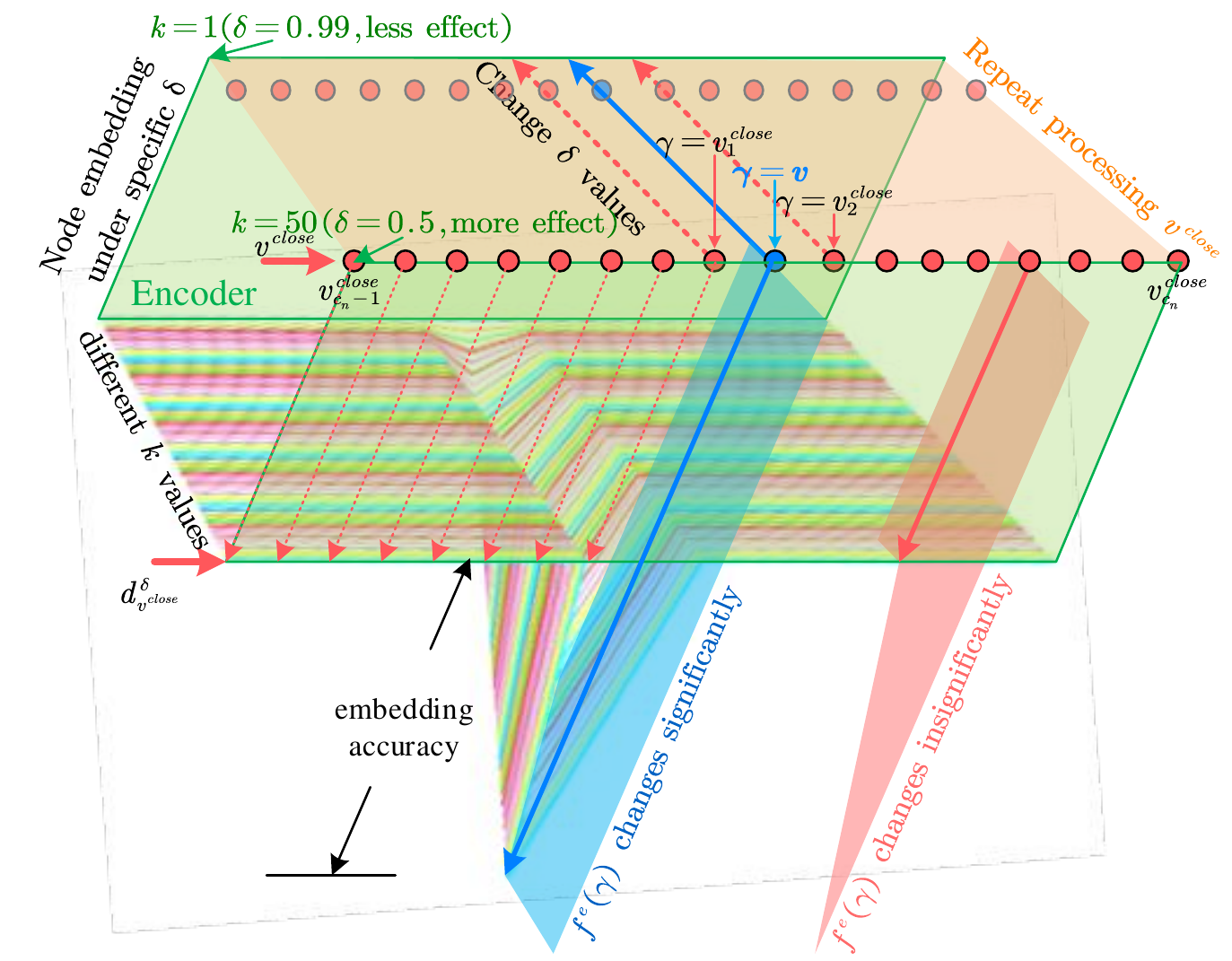}
\caption{The deviation of node embedding after disturb the $u(\cdot)$ corresponding to neighbors. Only when $\delta$ acts on the target point $v$, the embedding accuracy will suddenly drop (measured by the Euclidean distance between $f^e(v)$ and $f ^ { e,\delta } ( v )$), and the other conditions will remain stable.}\label{fig99}
\vspace{-0.5cm}
\end{figure}

\begin{figure}
\centering
\subfigure{
\centering
\begin{minipage}{0.20\textwidth}
\includegraphics[width=\textwidth]{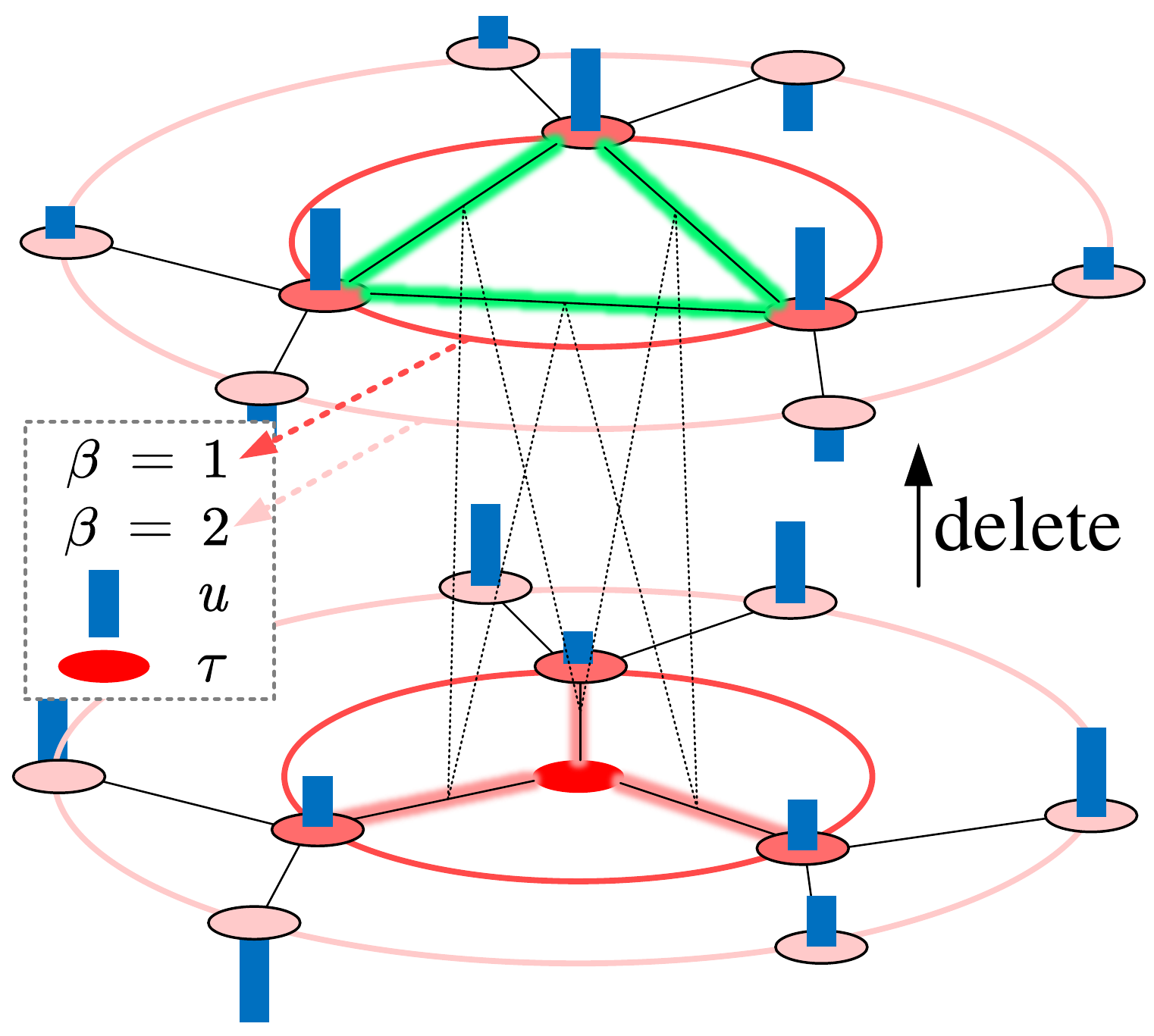}
\end{minipage}}
\subfigure
\centering
{\begin{minipage}{0.25\textwidth}
\includegraphics[width=\textwidth]{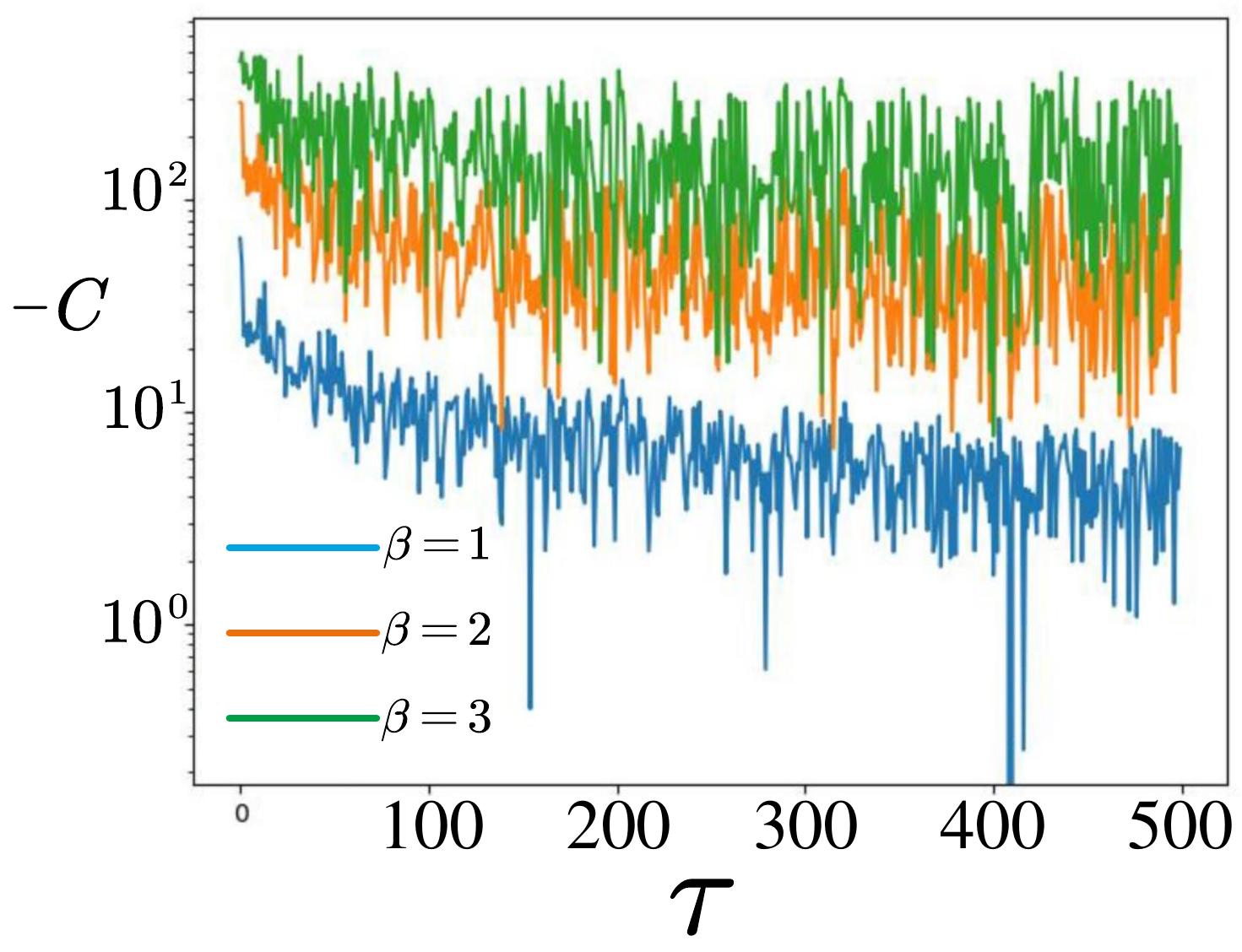}
\end{minipage}}
\caption{Deviation of $u(\cdot)$ in different positions after deleting nodes. After deleting $\tau$, the change of neighbor $u^{(d)}(\tau_{\beta}(\cdot))$ of different orders $\beta$ of $\tau$ (right). After deleting node $\tau$, the overall difference in the change of $u^{(d)}(\tau_{\beta}(\cdot))$ for each order neighbor is large, and the $u(\cdot)$ of the first-order neighbor $u^{(d)}(\tau_{\beta}(\cdot))$ has the largest change (the vertical axis is $-C$). In other words, after the node $\tau$ is deleted, the $u(\cdot)$ corresponding to its first-order neighbor the $u(\tau_{1}(\cdot))$ has changed significantly, while the $u(\tau_{2}(\cdot))$ and $u(\tau_{3}(\cdot))$ has changed less and showed a decreasing trend. Since deleting node $\tau$ significantly affects the position of its first-order neighbors, as the order increases, the degree of influence gradually decreases, so the change of its $u(\tau_{\beta}(\cdot))$  also gradually decreases.} \label{fig3}
\vspace{-0.5cm} 
\end{figure}

\begin{figure*}[ht]
\centering
\includegraphics[width=\textwidth]{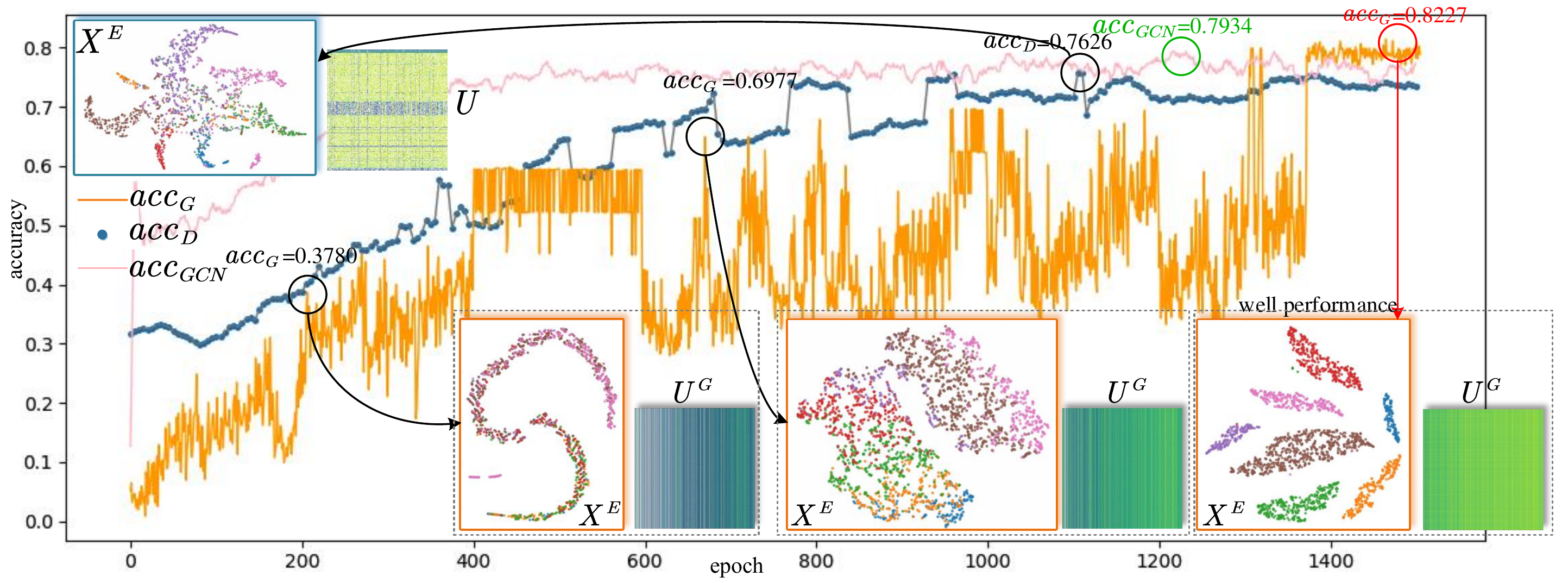}
\caption{Accuracy and visualization of node embedding during training, and visualization of node embedding. It can be seen that $acc_D$ and $acc_G$ are rising at the same time. When epoch $>$ 1400, $acc_G$  remains at a high and stable state. We select $G$ at epoch $= 1475$ as the final model selection, and the node embedding accuracy is 0.8227. In the 1500 epochs of training, the highest value of $acc_{GCN}$ is 0.7934. The experimental results show that under the same convolution kernel design, AN-GCN not only effectively maintains the anonymity of the node, but also has a higher detection accuracy than only GCN 0.0293 higher.}
\label{fig6}
\end{figure*}

\subsubsection{The influence of $u(\cdot)$ on the node embedding}
By selecting a target node $v$, the deviation of $v$ in the embedding space is observed by applying a small disturbance to the $u(\cdot)$ for neighbors of $v$. Let $\mathrm{Nbor}_{1^{\text{th}}}(v) = \{v_1(1),\ldots,v_1(c_v)\}$ denote the $\text{1}^{\text{st}}$-order neighbors of $v$, given a factor $\delta$, the embedding deviation of $v$ is calculated by acting $\delta$ on $\mathrm{Nbor}_{1^{\text{th}}}(v)$ \textit{successively}, thus in every acting turn, given a target $v^{close}_i$ original $U$ becomes a disturbed matrix $\hat{U}^{\delta,c_v}$, where all rows in $\hat{U}^{\delta,c_v}$ satisfy 
\begin{equation}
     \hat{u}^{\delta,c_v}(i) = \left\{ \begin{array} { c }
    u(i), \ \stt \ i = v_1(i) \\
    \delta u(i),\  \stt \ i \neq v_1(i)
    \end{array}\right.,
\end{equation}
then embed $f$ to embedding space by $\hat{U}^{\delta,c_v}$ and a well-trained $g _ { \theta } ( \Lambda )$, by using Eq.~\eqref{eqbeforegenA}, embedding features under perturbing is
\begin{equation}
    f^{e,\delta} = \hat{U}^{\delta,c_v} g _ { \theta } ( \Lambda ) (\hat{U}^{\delta,c_v})^{\top}f.
\end{equation}
Further, we calculate the Euclidean distance~\cite{68} between $f^{e,\delta}$ and $f^e$ (Eq.~\eqref{eqbeforegenA})
\begin{equation}
    d_{v}^{\delta} =  \| f^{e,\delta} - f^e   \| _2 ,
\end{equation}
while changing $\delta$, using Chebyshev polynomial~\cite{22} as the convolution kernel, and Cora as the test data set, set $c_v=14$, $\delta=1-\frac{k}{100},k\in\left\{1,\ldots,50\right\}$, the result is shown as Fig.~\ref{fig99}, it can be seen $u(v)$ has a greater impact on the embedding of node $v$.

\subsubsection{The influence of the position changing of the node on $u(\cdot)$}Delete a node $\tau$ in the graph $\mathcal{G}$ to obtain the deleted graph $\mathcal{G}^{(d)}_{\tau}$, and calculate the laplacian matrix $L^{(d)}_{\tau}\sim\mathbb{R}^{(N-1)\times(N-1)}$ and matrix eigenvalues $U^{(d)}_{\tau}=\{u^{(d)}(1),\ldots,u^{(d)}(N-1)\}\sim\mathbb{R}^{(N-1)\times(N-1)}$. To keep $\mathcal{G}^{(d)}_{\tau}$ will connected, all edges connected to $\tau$ will be re-connected by traversal. Specifically, we stipulate that $\omega_{ij}$ is the weight of connecting nodes $i$ and $j$ in $\mathcal{G}$, and $\omega_{ij}^{(d)}$ corresponds to graph $\mathcal{G}^{(d)}_{\tau}$. The calculation method of all edge weights in $\mathcal{G}^{(d)}_{\tau}$ is:
\begin{equation}
\omega _ { i j } ^ { ( d ) } = \left\{ \begin{array} { c } \omega _ { i j } , \omega _ { \tau i } = 0 \text { or } \omega _ { \tau j } = 0 \\ \omega _ { i j + } \frac { \omega _ { \tau i } + \omega _ { \tau j } } { 2 } , \omega _ { \tau i } \neq 0 , \omega _ { \tau j } \neq 0 \end{array} \right..
\end{equation}

After obtaining the fully connected $\mathcal{G}^{(d)}_{\tau}$, we recalculate corresponding $U^{(d)}_{\tau}$, and query all $\beta^{\text{th}}$-order neighbors of $\tau$: $\mathrm{Nbor}_{\beta^{\text{th}}}(\tau)=\{\tau_{\beta}(1),\tau_{\beta}(2),\ldots\}$, and obtain a list $u(\tau_{\beta}(1)), \ldots, u(\tau_{\beta}(N-1))$ after deleting edges, which is denote as $u^{(d)}(\tau_{\beta}(\cdot))$. The quantitative representation of the change between the two is as follows:
\begin{small}
\begin{multline}\label{eq21}
    \mathrm{C}\left[u\left(\tau_{\beta}(\cdot)\right),u^{(d)}\left(\tau_{\beta}(\cdot)\right)\right]\\
    =\sum_{i=1}^{N-1}\left[\log{|u_i\left(\tau_{\beta}(\cdot)\right)|}^2-\log{|u_i^{(d)}\left(\tau_{\beta}(\cdot)\right)|}^2\right].
\end{multline}
\end{small}
Replace different $\beta$ and calculate $\mathrm{C}(\cdot)$, the result is as shown in Fig.~\ref{fig3}. We select the first 500 nodes according to the number of connections from large to small. Figure~\ref{fig3} proves that the position of node $\tau$ is inseparable from $u(\tau)$.

\subsection{Evaluation for the effectiveness of AN-GCN}

Next, we evaluated AN-GCN. We mainly evaluate the accuracy of AN-GCN. Its robustness to the perturbation of training set is as the secondary evaluation item, because the application scenario of AN-GCN is that the user can ensure that AN-GCN is trained on the clean graph, the poisoning attack scenario which training AN-GCN on the poisoned graph is not the main problem of this paper. But we still prove that AN-GCN shows the robustness to the perturbations in the training set. 

First, we evaluate the accuracy of AN-GCN on Cora dataset. We hope that AN-GCN can output accurate node embedding while keeping the node position completely generated by the well-trained generator, so we use the accuracy of node embedding to evaluate the effectiveness of AN-GCN. Since the generator does not directly generate the node embedding but completes the node prediction task by cooperating with the trained discriminator, we denote the accuracy of the generator $acc_G$ to be the accuracy of the node label through the position generated by $G$, and denote $acc_D$ as the accuracy of the discriminator. Further, we use  $acc_{GCN}$ to denote the accuracy of single-layer GCN (the convolution kernel adopts symmetric normalized Laplacian matrix~\cite{14}) with the same kernel of $D$ as a comparison. The results are shown in Fig.~\ref{fig6}. 

Second, we evaluate the robustness of AN-GCN to the attack on Cora and Citeseer dataset, by taking a comparative experimental model as Semi-GCN, GAT, and RGCN~\cite{38} (the state-of-the-art defense model). The results are presented in Table~\ref{tabaa}, $P\ rate$ is the perturbation rate. The meta-learning attack~\cite{36} is the state-of-the-art attack method. The result shows that AN-GCN achieves state-of-the-art accuracy under attack. When the perturbation increases, the accuracy gap between our method and other methods widens, clearly demonstrating the advantage of our method.
\begin{table}[htb]
    \centering
    \caption{Classification accuracy for various models after meta-learning attack}\label{tabaa}
\includegraphics[width=8.3cm]{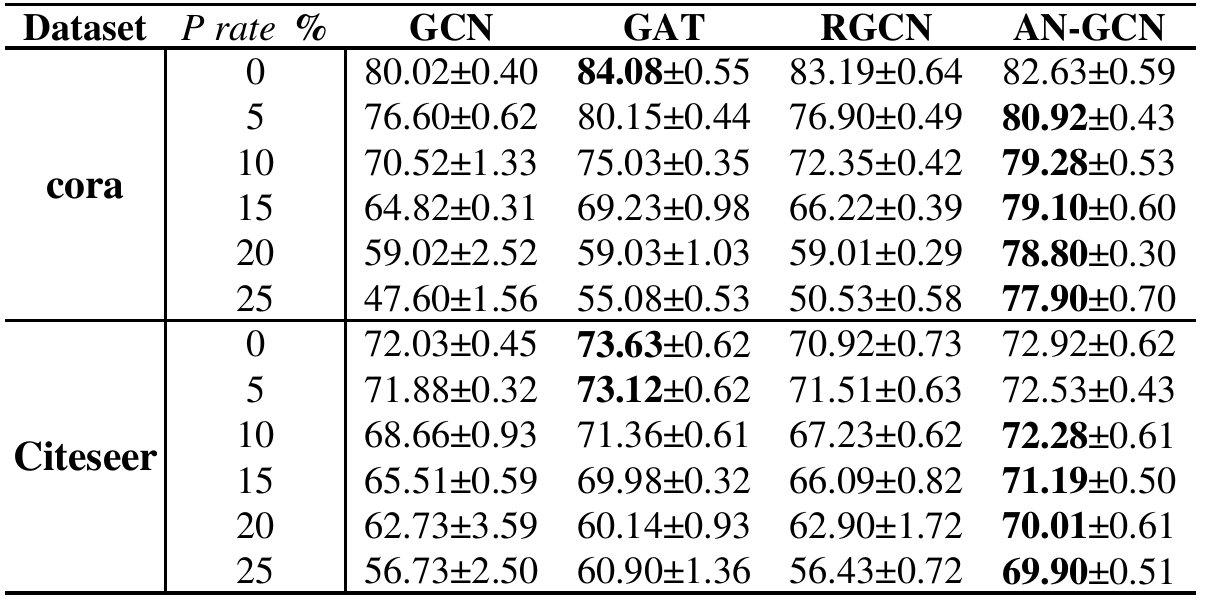}
\end{table}

In our previous work~\cite{69}, we have proved that in multi-class classification, using samples with other categories as the supervised samples of the generated samples will accelerate the convergence of GAN, similarly, the supervised samples of discriminator in AN-GCN are the same. Therefore, using GAN will not cause an excessive burden.


\section{Conclusions}
In this paper, we solve the vulnerability problem of GCNs. We propose a general edge-perturbing attack model to demonstrate the vulnerability of GCNs as the edge-reading permission. Therefore, we propose an an Anonymous Graph Convolutional Network (AN-GCN) which is capable of withdrawing the edge-reading permission. We propose a node signal vibration model to represent the feature changes of the single node during the end-to-end training. Then, we introduce the node location theory, which makes the node position computable. Then we propose a generator with Staggered  Gaussian distribution as input, and improve the existing spectral convolution network to the discriminator. Our proposed AN-GCN does not need edge-reading permission when classifying nodes, so as to solve the vulnerability of GCNs.

\begin{spacing}{1}
\bibliographystyle{IEEEtran}
\bibliography{ref.bib}
\end{spacing}




%
\IEEEpeerreviewmaketitle












\end{document}